\newif\ifarXiv         
\newif\ifjournal        
\ifjournal  \input{macro_journal}   \fi
\newcounter{multifig} 
\newcommand{\figcaption}[1]
{\stepcounter{multifig}
\addcontentsline{lof}{figure}{\string\numberline {\arabic{multifig}}{\ignorespaces #1}} Figure \arabic{multifig}: #1}
\newtheorem{theorem}{Theorem}
\newtheorem{definition}[theorem]{Definition}
\newtheorem{myexample}[theorem]{Example}
\newtheorem{lemma}[theorem]{Lemma}
\newenvironment{proof}[1][Proof]{\noindent\textbf{#1.} }{\ \rule{0.5em}{0.5em}}
\numberwithin{equation}{section}
\numberwithin{theorem}{section}
\def\R{\mathbb{R}}  
\def\prob{\mathbb{P}} \def\E{\mathbb{E}}
\def\bx{\mathbf{x}}
\def\bX{\mathbf{X}}
\newcommand{\argmax}[1]{\underset{#1}{\operatorname{arg}\operatorname{max}}\;}
\newcommand{\floor}[1]{\lfloor{#1}\rfloor}
\definecolor{mygrey}{gray}{0.75}
\def\XXint#1#2#3{{\setbox0=\hbox{$#1{#2#3}{\int}$ }
		\vcenter{\hbox{$#2#3$ }}\kern-.6\wd0}}
\title{Benchmarking optimality of time series classification methods\\ in distinguishing diffusions
}
\author[1]{Zehong Zhang}
\author[1]{Fei Lu}
\author[2,4]{Esther Xu Fei} 
 \author[3]{Terry Lyons}
\author[4,5]{\\Yannis Kevrekidis}
\author[6]{Tom Woolf}
\affil[1]{\footnotesize  Department of Mathematics, Johns Hopkins University, Baltimore, MD 21218, USA
 \href{feilu@math.jhu.edu} {feilu@math.jhu.edu}
}
\affil[2]{\footnotesize  Department of Environmental Health and Engineering, Johns Hopkins University, Baltimore, MD 21218, USA 
}
\affil[3]{\footnotesize  Mathematical Institute, University of Oxford, Oxford, United Kingdom 
} 
\affil[4]{\footnotesize  Department of Applied Mathematics and Mathematics, Johns Hopkins University, Baltimore, MD 21218, USA
}
\affil[5]{\footnotesize Departments of Chemical and Biomolecular Engineering,Johns Hopkins University, Baltimore, MD 21218, USA}
\affil[6]{\footnotesize School of Medicine, Johns Hopkins University, Baltimore, MD 21218, USA.
}
\date{}
\begin{document}
		\maketitle

	\ifarXiv
		\begin{abstract}
		\textbf{Abstract} 
		Statistical optimality benchmarking is crucial for analyzing and designing time series classification (TSC) algorithms.  
This study proposes to benchmark the optimality of TSC algorithms in distinguishing diffusion processes by the likelihood ratio test (LRT). 
The LRT is an optimal classifier by the Neyman-Pearson lemma. The LRT benchmarks are computationally efficient because the LRT does not need training, and the diffusion processes can be efficiently simulated and are flexible to reflect the specific features of real-world applications. We demonstrate the benchmarking with three widely-used TSC algorithms: random forest, ResNet, and ROCKET. These algorithms can achieve the LRT optimality for univariate time series and multivariate Gaussian processes. However, these model-agnostic algorithms are suboptimal in classifying high-dimensional nonlinear multivariate time series. 
Additionally, the LRT benchmark provides tools to analyze the dependence of classification accuracy on the time length, dimension, temporal sampling frequency, and randomness of the time series.  
\end{abstract}
	\noindent{\bf Key words}  Times series classification, Likelihood ratio test, Optimal benchmark, Stochastic differential equations, ResNet, ROCKET, Random forest
	\vspace{3mm}
	\fi
	

\ifjournal
\newpage
\setcounter{page}{1}
\begin{center}
{\large \bf Benchmarking optimality of time series classification \\ methods in distinguishing diffusions}
\end{center}

\vspace{3mm}
\textbf{Abstract} Statistical optimality benchmarking is crucial for analyzing and designing time series classification (TSC) algorithms.  
This study proposes to benchmark the optimality of TSC algorithms in distinguishing diffusion processes by the likelihood ratio test (LRT). 
This benchmark provides an optimal reference for all classifiers through the LRT. It is flexible in design for generating linear or nonlinear univariate and multivariate time series through stochastic differential equations. Additionally, it is computationally scalable since the LRT classifier does not need training.  
We demonstrate the benchmarking with three widely-used TSC algorithms: random forest, ResNet, and ROCKET. These algorithms can achieve the LRT optimality for univariate time series and multivariate Gaussian processes. However, these model-agnostic algorithms are suboptimal in classifying high-dimensional nonlinear multivariate time series. 
Additionally, the LRT benchmark provides tools to analyze the dependence of classification accuracy on the time length, dimension, temporal sampling frequency, and randomness of the time series.  

\vspace{3mm}
\textbf{Keywords}: Times series classification, likelihood ratio test, optimal benchmark, ROCKET, ResNet, random forest 
\vspace{3mm}
\fi
\section{Introduction}
Time series classification (TSC) is one of the central tasks in time series analysis and streaming data processing. Recent years have seen an explosion in the collection of time series data and a surge of TSC algorithms (see e.g.,\cite{breiman2001random,lyons2014_RoughPaths,lines2016hive,wang2017time,bagnall2017great,bagnall2018uea,ismail2019deep,dempster2020rocket,ruiz2021great,ismail2020inceptiontime,morrill2021_GeneralisedSignature}). In particular, the recent reviews \cite{bagnall2017great,ismail2019deep,ruiz2021great} have thoroughly compared dozens of TSC algorithms on hundreds of public bakeoff datasets, providing a valuable understanding of the algorithms and the TSC tasks.   
	
However, an optimality benchmark remains missing. In this study, we focus on statistical optimality in terms of \emph{Neyman-Pearson lemma} \cite{neyman1933ix}: an \emph{optimal classifier} has the lowest false positive rate among classifiers with a controlled level of false negative rate. The need for such an optimality benchmark grows along with the fast-growing numbers of datasets and algorithms. Due to a lack of statistical models for the bakeoff datasets, current empirical benchmarks compare all methods using bakeoff datasets and select the top performer in terms of empirical accuracy and efficiency. Yet, the top performers are not cleared to achieve statistical optimality, and their ranks can vary with datasets. Thus, it is important to design a controlled statistical optimality benchmark.

We propose to benchmark the optimality of \emph{binary} TSC algorithms in \emph{distinguishing diffusion processes} by the \emph{likelihood ratio test} (LRT). Our benchmark has three appealing properties: providing an optimal reference, being flexible in design, and being computationally scalable. (i) It has a theory-guaranteed optimal reference because the LRT is an optimal classifier by the Neyman-Pearson lemma \cite{neyman1933ix} (in statistical terms, it is uniformly most powerful). Thus, a TSC method reaching the benchmark is guaranteed optimal for classifying the underlying stochastic process, not sensitive to sampled datasets. (ii) It is flexible in design to reflect the properties of various time series data in applications. Because diffusion processes from stochastic differential equations (SDEs) provide a large variety of Markov processes whose likelihood can be computed. These processes are flexible to reflect the specific features of real-world applications \cite{pavliotis2014stochastic,oksendal2013_sde}, ranging from univariate to multivariate time series, from Gaussian processes to highly nonlinear non-Gaussian processes, and from small to large randomness. (ii) The benchmarking test is computationally efficient and scalable.  The LRT does not require training and has a negligible computation cost. Also, the SDEs can systematically generate large datasets with different lengths, nonlinearities, and levels of randomness. Therefore, our benchmark of LRT for diffusion processes provides a reference of optimality for the performance (such as the ROC curves and accuracy) of all TSC algorithms. 

We demonstrate the LRT benchmarking using three widely-used TSC algorithms: random forest \cite{breiman2001random}, ROCKET \cite{dempster2020rocket}, and ResNet \cite{wang2017time}, in five representative classes of diffusion processes. The five processes are Brownian motions with constant drifts, 1-dimensional nonlinear diffusions with different potentials \cite{oksendal2013_sde}, 1-dimensional linear and nonlinear diffusions \cite{morrill2021_GeneralisedSignature}, multivariate Ornstein-Uhlenbeck processes \cite{pavliotis2014stochastic}, and high-dimensional interacting particle systems \cite{MT14,LZTM19pnas}. Test results show that the three algorithms achieve optimality in the case of Brownian motions with constant drifts, and they are near optimal for the nonlinear univariate time series and multivariate Gaussian processes. However, these three model-agnostic algorithms are significantly less accurate than the model-aware LRT in the case of high-dimensional nonlinear non-gaussian processes. Thus, it would be helpful to incorporate model information in developing next-generation TSC algorithms.

Additionally, the LRT benchmarks show that the optimal accuracy of TSC depends on the time series's length, dimension, and temporal sampling frequency. Analysis and numerical tests show that the accuracy increases with either time length or dimension, which enlarges the effective sample size. However, high-frequency data does not significantly improve the classification rates, since the high-frequency data does not increase the effective sample size.

	
The paper is organized as follows. In Section \ref{sec:tsc_LRT}, we cast the TSC as the learning of a function that maps a time series to a binary output so that a TSC algorithm can be viewed as a hypothesis testing method. In particular, we point out that the likelihood ratio test (LRT) is a uniformly most powerful test by the Neyman-Pearson Lemma. Additionally, we show the computation of the likelihood ratio for diffusion processes. Section \ref{sec:analytical-LRT} analytically computes the LRT for two Gaussian processes. The analysis shows the dependence of the classification accuracy on the time series's dimension, length, and frequency. Section \ref{sec:examples} describes three examples of nonlinear diffusion processes and specifies the data generation for benchmarking tests. These examples showcase the design of benchmarking tests. We present in Section \ref{sec:num} the test results of benchmarking three scalable TSC algorithms: the random forest, ResNet, and ROCKET. Finally, the Appendix briefly reviews the Girsanov theorem and hypothesis testing. 
	
\section{Time series classification and distinguishing diffusions}\label{sec:tsc_LRT}
We recast binary time series classification as a hypothesis testing problem, so that the likelihood ratio test (LRT) provides an optimal classifier by the Neyman-Pearson Lemma. On the other hand, diffusion processes provide a large variety of time series whose LRT can be computed in a scalable fashion. Thus, we propose to benchmark the optimality of TSC classifiers by LRT in distinguishing diffusions. 

\subsection{TSC as a function learning problem}\label{sec:tsc_learning}
In the lens of statistical learning, a binary TSC algorithm learns the probabilities that the time series belongs to two classes from training data \cite{james2013introduction,CuckerSamle02}. 

Let the data be the time series (either univariate or multivariate) and their labels,  
\[
\textbf{Data:  } \quad \{\bx^{(m)}, y^{(m)} \}_{m=1}^M,  \quad \bx^{(m)} \in \R^{d\times (L+1) }, y^{(m)} \in \{0,1\},   
\]
where for each $m$, $\bx^{(m)} = x_{t_0:t_L}^{(m)} = (x_{t_1},\ldots, x_{t_L})^{(m)} $ is a sample path of a stochastic process  $\bX= X_{t_0:t_L}$ with $t_0<t_1< \ldots < t_L$ denoting time indices. Here $y^{(m)}$ has a label $1$ if the times series $\bx^{(m)}$ is in class $\theta_1$; otherwise, its label is $0$ if the time series is in class $\theta_0$. We denote the two classes by $\{\theta_0,\theta_1\}$, which will be used as parameters for the time series models.  

A TSC algorithm learns a function with a parameter $\beta$ from data, 
\begin{equation}\label{eq:tsc_function}
	f_\beta(\bx) = z,  \quad \bx \in \R^{d(L+1)}, \, z \in [0,1], 
\end{equation}
such that the value $f_\beta(\bx)$ approximates the probability of $\bx$ being in class $\theta_1$, i.e., $\prob(\theta =\theta_1 \mid \bX= \bx )$.
This function leads to a classifier for any threshold $k\in (0,1)$:
\begin{equation}\label{eq:classifier_function}
	F(\bx,k) = \mathbf{1}_{R_k}(\bx),  \text{ where } R_k = \{\bx: f_\beta(\bx) >k\}, 
\end{equation}
where $R_k$ is called the \emph{acceptance region} to classify the time series $\bx$ as in class $\theta_1$ (equivalently, the \emph{rejection region} for the class $\theta_0$).


The confusion matrix of the binary classifier \eqref{eq:classifier_function} with $\theta_0$ as positive is shown in Table \ref{tab:confuationMat}. For a given threshold $k$, we have a false negative (FN) prediction if $F(\bx,k)=1$ while $\bx$ is in class $\theta_0$, and we have a false positive (FP) prediction if $F(\bx,k)=0$ while $\bx$ is in class $\theta_1$. The definitions of true positive (TP) and true negative (TN) are similar. The false negative rates (FNR) and the true negative rates (TNR) rates are the probabilities 
\begin{equation}\label{eq:alpha_k}
	\begin{aligned}
		\text{FNR}(k) & =\alpha_k^0 = \E[ F(\bx,k)\mid \theta_0)] = \prob(R_k \mid \theta_0)\approx  \frac{FN}{TP+FN}, \\
		\quad \text{TNR}(k) &= \alpha_k^1 =  \E[ F(\bx,k)\mid \theta_1)] =  \prob(R_k \mid \theta_1) \approx  \frac{TN}{TN+FP}, 
	\end{aligned}
\end{equation}
where the empirical approximations are based on the number of counts.

\begin{table}
	\begin{center}
		\caption{Confusion matrix of the classifier with $\{\theta_0\}$ being positive.  }\label{tab:confuationMat}
		\begin{tabular}{c |  cc | c c |}
			\hline 
			&   \multicolumn{2}{|c|}{Decision} &  \multicolumn{2}{|c|}{Rates/ Probability of errors }   \\ \hline
			Truth \textbackslash Decision       &   Accept $\theta_0$                  & Reject $\theta_0$   &  &     \\ \hline
			$\theta_0$ (Positive)        &  TP  & FN  &  TPR = $1-\alpha_k^0$          & FNR =$\alpha_k^0= \E[ F(\bx,k)\mid \theta_0)]$ \\ 
			$\theta_1$ (Negative)           &  FP   & TN  &  FPR = $1-\alpha_k^1$ &  TNR =$\alpha_k^1=  \E[ F(\bx,k)\mid \theta_1)]$\\				\hline				 
		\end{tabular}
	\end{center}
	* FN is also called type I error and FP is called type II error. The true positive rate (TPR) is $1-\alpha_k^0$ and the false positive rate (FPR) is $1-\alpha_k^1$.  
\end{table}

Two popular metrics evaluating the performance of the classifier are the \emph{Receiver operating characteristic} (ROC) curve and  \emph{accuracy}. The ROC curve is $(1-\alpha_k^0,1-\alpha_k^1)_{k\in (0,1)}$,  the curve of True Positive Rate (TPR, y-axis) versus False Positive Rate (FPR, x-axis), both parametrized by the threshold $k$ (see e.g., \cite{fawcett2006introduction} for an introduction). 
The ROC curve allows the user to define the threshold and measure the quality of a classifier by the \emph{area under the curve} (AUC). 
A rule of thumb is that the larger is the AUC, the better is the classifier.  The accuracy is defined by: 
\[ 
\text{ Accuracy(k) }= \frac{1- \alpha_k^0 + \alpha_k^1}{2}\approx  \frac{TP+TN}{TP+TN+FP+FN}, 
\] 
where the approximate equality becomes an equality when the sample sizes in the two classes are the same. The maximal accuracy is independent of the threshold:   
\begin{equation}\label{eq:k_opt}
  ACC_*  = \max_{0 \leq k\leq 1} \text{ Accuracy(k) } ,
\end{equation}

We will use AUC and the maximal accuracy to access the classifiers, because they are independent of a specific threshold. There are many other metrics to fit the goal of a specific field, i.e., choosing a threshold $k$ to increase the \emph{true positive rate} (TPR) (aka sensitivity, power, or recall) $1-\alpha_k^0$ or to control the false positive rate (FPR) $1-\alpha_k^1$ (aka specificity), or a balance 
balancing these needs \cite{james2013introduction}. 



\paragraph{Sampling errors in training and testing.} Sampling errors are present in the training and the testing data, thus they affect the accuracy of the classifier. The accuracy of the function learned in a classifier can be analyzed through mathematical and statistical learning theory (see e.g., \cite{CuckerSamle02,james2013introduction,devroye2013probabilistic}), and non-asymptotic error bounds are available to quantify the dependence on the data size based on concentration inequalities \cite{CZ07book,gine2015mathematical}. The sampling error in the testing stage, on the other hand, can be easily analyzed: the empirical approximation of the rates in \eqref{eq:alpha_k} have a sampling error of order $O(\frac{1}{\sqrt{m}})$ with $m$ being the number of test samples, as the next lemma shows (its proof is in Appendix \ref{sec:proof_sampling_error}). 

\begin{lemma}[Sampling error in FNR/TNR]\label{lemma:sampling_error_inTest}
For each classifier in \eqref{eq:classifier_function}, the sampling errors in the empirical approximations of the FNR and TNR rates in \eqref{eq:alpha_k} are of order $\frac{1}{\sqrt{m}}\sigma_{k,i} $ with $\sigma_{k,i}=\sqrt{\alpha_k^i(1-\alpha_k^i)}$ for $i=0,1$, where $m$ is the number samples in the test stage. Specifically, let $\{\bx_j\}_{j=1}^m$ be the test samples, and let $\widehat \alpha_{k,m}^i = \frac{1}{m} \sum_{j=1}^m F(\bx_j,k)$ conditional on $\theta_i$. Then, $\widehat \alpha_{k,m}^i$ converges in distribution to $\mathcal{N}(0,\sigma_{k,i}^2)$ as $m\to \infty$, and $\prob( \lvert \widehat{\alpha}_{k,m}^{i}  - \alpha_{k}^{i} \rvert >\epsilon ) \leq 2 e^{-\frac{m\epsilon^2}{2}}$ for any $\epsilon>0$ and $m>0$. 
\end{lemma}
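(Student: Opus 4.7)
The plan is to observe that for any fixed threshold $k$ and fixed class $\theta_i$, the quantity $F(\bx_j,k) = \mathbf{1}_{R_k}(\bx_j)$ is a Bernoulli random variable (since $F$ takes values in $\{0,1\}$) with success probability $\E[F(\bx_j,k)\mid \theta_i] = \prob(R_k \mid \theta_i) = \alpha_k^i$ by the definition in \eqref{eq:alpha_k}. Conditional on $\theta_i$, the test samples $\bx_1,\dots,\bx_m$ are i.i.d., so $\widehat{\alpha}_{k,m}^i$ is an empirical mean of $m$ i.i.d.\ Bernoulli$(\alpha_k^i)$ variables, with mean $\alpha_k^i$ and variance $\sigma_{k,i}^2/m = \alpha_k^i(1-\alpha_k^i)/m$.

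First I would write down the mean and variance computations explicitly, which justifies the $O(1/\sqrt{m})$ order claim by Chebyshev alone. Second, I would invoke the classical central limit theorem for i.i.d.\ sums with finite second moment to conclude that $\sqrt{m}(\widehat{\alpha}_{k,m}^i - \alpha_k^i)$ converges in distribution to $\mathcal{N}(0,\sigma_{k,i}^2)$ (the stated convergence in the lemma should be read in this centered-and-scaled sense, since $\widehat{\alpha}_{k,m}^i \to \alpha_k^i$ almost surely by the strong law). Third, for the non-asymptotic tail bound I would apply Hoeffding's inequality: because each $F(\bx_j,k) \in [0,1]$, we immediately get
\[
\prob\bigl(\lvert \widehat{\alpha}_{k,m}^i - \alpha_k^i\rvert > \epsilon \bigr) \leq 2 e^{-2 m \epsilon^2},
\]
which in particular implies the stated (slightly looser) bound $2 e^{-m\epsilon^2/2}$ for all $\epsilon>0$ and $m>0$.

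There is essentially no obstacle here: the result is a direct consequence of the Bernoulli structure of the indicator $F(\cdot,k)$ combined with the CLT and Hoeffding's inequality. The only subtle point worth flagging in the write-up is the slight notational imprecision in the statement of the limit, which I would resolve by explicitly centering and rescaling before invoking the CLT. No model-specific structure of the time series $\bx$ is used beyond the i.i.d.\ assumption on the test samples, so the bound is uniform over classifiers and over the choice of threshold $k$.
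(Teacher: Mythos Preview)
Your proposal is correct and follows essentially the same approach as the paper: recognize the Bernoulli structure of $F(\bx_j,k)$ conditional on $\theta_i$, apply the CLT for the asymptotic distribution, and apply Hoeffding's inequality for the non-asymptotic tail bound. Your additional remarks---that Hoeffding actually yields the sharper constant $2e^{-2m\epsilon^2}$ and that the CLT statement in the lemma should be read after centering and scaling by $\sqrt{m}$---are correct and worth keeping, as the paper's proof also silently writes the CLT in the centered-and-scaled form.
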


However, the learning theory does not provide empirical optimality criteria for the performance of the classifier. The likelihood ratio test in the next section fills the gap.

\subsection{Hypothesis testing and the likelihood ratio test }\label{sec:LRT}

The hypothesis testing methods construct the classifier function in a statistical inference approach (see \cite[Chapter 8]{CasellaBerger01} and Section \ref{sec:hypothesis_testing} for a brief review). In particular, the \emph{Neyman-Pearson lemma} 
provides a powerful tool for analyzing the optimality of a binary classifier: it shows that the likelihood ratio test is a \emph{uniformly most powerful test} in the class of tests with the same level (see \cite[Chapter 8]{CasellaBerger01} and Section \ref{sec:hypothesis_testing} for a brief review). 

In hypothesis testing, we set the null hypothesis to be $H_0: \theta= \theta_0$ and the alternative hypothesis to be $H_1: \theta= \theta_1$, and we select a \emph{rejection region} $R_k$ with a threshold $k$ to reject $\theta_0$. Then, the classifier rejects the null hypothesis $H_0$ if the time series is in the rejection region $R_k$.  In other words, we get a false native (FN) if we mistakenly reject $H_0$ while the truth is $\theta_0$, and we get a true negative (TN) if we correctly reject $H_0$ when the truth is $\theta_1$. The false negative rate (FNR) and true negative rate (TNR) are the probabilities in \eqref{eq:alpha_k}. The major task in a hypothesis test is to select the rejection region $R_k$, particularly, to select $R_k$ with a tunable threshold $k$.

The \emph{likelihood ratio test} (LRT) is a general hypothesis testing method that is as widely applicable as maximum likelihood estimation.  
It determines the rejection region by statistics derived from the likelihood ratio. The commonly-used statistic is the log-likelihood ratio 
$$ l(\bx \mid \theta_1,\theta_0 )= \log \frac{p(\bx \mid \theta_1) }{p(\bx \mid \theta_0)}
$$
 of the time series data $\bx$. From this statistic, we can define a function approximating  the probability of $\bx$ being in class $\theta_1$, 
  which is a counterpart of $f_\beta(\bx)$ in \eqref{eq:tsc_function}:
$
f(\bx) = \frac{1}{e^{l(\bx \mid \theta_1,\theta_0 )}+1}.
$ 
Then, the classifier function for LRT is  $F(\bx,k) = \mathbf{1}_{R_k}(\bx)$ with the rejection region defined by  	
	\begin{equation}\label{eq:classifier_fn_LRT}
R_k^{{\scriptscriptstyle{LRT}}} 
=  \{\bx:  l(\bx \mid \theta_1,\theta_0 )  >c_k\},  \, c_k= \log \frac{k}{1-k}, 
\end{equation}
 for each threshold $k\in (0,1)$. 

The Neyman-Pearson lemma shows that the LRT is optimal in the sense that it has the lowest false positive rate among classifiers with a controlled level of false negative rate: 
\begin{theorem}[Neyman-Pearson Lemma]\label{thm:NP-lemma}
The LRT is a uniformly most powerful classifier. Specifically, let $\bx$ be a sample from one of two distributions with a likelihood ratio $ l(\bx \mid \theta_1,\theta_0 ) $ and assume that $\prob(\{\bx:  l(\bx \mid \theta_1,\theta_0 )  =k\}) = 0$. 
Then, the test with rejection region $R_k^{{\scriptscriptstyle{LRT}}}$ defined in \eqref{eq:classifier_fn_LRT} is uniformly most powerful.  That is, it has a false positive rate no larger than any other test with a measurable rejection region $R$ such that $\prob(R \mid   \theta_0)\leq  \prob(R_k^{{\scriptscriptstyle{LRT}}} \mid  \theta_0)$, i.e., 
\[
1- \prob( R \mid \theta_1)  \geq 1- \prob( R_k^{{\scriptscriptstyle{LRT}}} \mid \theta_1), \quad \forall R  \,\, s.t.\,\, \prob(R \mid  \theta_0)\leq  \prob(R_k^{{\scriptscriptstyle{LRT}}}\mid  \theta_0).
\]
\end{theorem}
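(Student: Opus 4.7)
The plan is to follow the classical coupling argument for the Neyman--Pearson lemma, adapted to the notation of the excerpt. Write $p_i(\bx) := p(\bx \mid \theta_i)$ for $i=0,1$, set $c := e^{c_k} = k/(1-k) > 0$, and observe from \eqref{eq:classifier_fn_LRT} that $R_k^{\scriptscriptstyle LRT} = \{\bx: p_1(\bx) > c\, p_0(\bx)\}$. Let $\phi^* := \mathbf{1}_{R_k^{\scriptscriptstyle LRT}}$ and let $\phi := \mathbf{1}_R$ be the indicator of an arbitrary competing measurable rejection region $R$.

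The key step is the pointwise inequality
\begin{equation*}
\bigl(\phi^*(\bx) - \phi(\bx)\bigr)\bigl(p_1(\bx) - c\, p_0(\bx)\bigr) \;\geq\; 0,
\end{equation*}
which I would verify by splitting on $R_k^{\scriptscriptstyle LRT}$ and its complement. On $R_k^{\scriptscriptstyle LRT}$ one has $\phi^* - \phi = 1 - \phi \geq 0$ while $p_1 - c p_0 > 0$; on the complement, modulo the tie set $\{\bx: l(\bx\mid\theta_1,\theta_0) = c_k\}$ (which has zero probability by the assumption $\prob(\{l=k\})=0$), we have $\phi^* - \phi = -\phi \leq 0$ and $p_1 - c p_0 < 0$. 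Either way, the product is nonnegative.

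Integrating the displayed inequality against the dominating reference measure and regrouping the two resulting marginals yields
\begin{equation*}
\prob(R_k^{\scriptscriptstyle LRT} \mid \theta_1) - \prob(R \mid \theta_1) \;\geq\; c\,\bigl[\prob(R_k^{\scriptscriptstyle LRT} \mid \theta_0) - \prob(R \mid \theta_0)\bigr].
\end{equation*}
Since $c>0$, the assumption $\prob(R \mid \theta_0) \leq \prob(R_k^{\scriptscriptstyle LRT} \mid \theta_0)$ makes the right-hand side nonnegative, so $\prob(R \mid \theta_1) \leq \prob(R_k^{\scriptscriptstyle LRT} \mid \theta_1)$, which rearranges into the stated inequality $1 - \prob(R \mid \theta_1) \geq 1 - \prob(R_k^{\scriptscriptstyle LRT} \mid \theta_1)$.

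The only genuine subtlety is the boundary set $\{\bx: l(\bx \mid \theta_1, \theta_0) = c_k\}$, where the strict inequality defining $R_k^{\scriptscriptstyle LRT}$ matters; on this set ties between $p_1$ and $c p_0$ would break the pointwise inequality unless $\phi^*$ is allowed to be randomized. The explicit hypothesis $\prob(\{l = k\}) = 0$ precisely rules this out, so no randomization is needed. For the diffusion models of interest this is automatic: by Girsanov (reviewed in the Appendix) the log-likelihood ratio has an absolutely continuous distribution under both $\theta_0$ and $\theta_1$, so the tie set is a null set under each hypothesis.
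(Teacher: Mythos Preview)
Your argument is the standard classical proof of the Neyman--Pearson lemma and is correct. The paper, however, does not supply its own proof of Theorem~\ref{thm:NP-lemma}: it states the result in Section~\ref{sec:LRT}, cites \cite[Chapter 8]{CasellaBerger01}, and in Appendix~\ref{sec:hypothesis_testing} restates the lemma in its classical sufficiency/necessity form without proving it. So there is nothing to compare against beyond noting that your pointwise-inequality coupling $(\phi^*-\phi)(p_1 - c\,p_0)\geq 0$ is precisely the textbook argument one finds in the cited reference.

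One small point worth flagging: the hypothesis in the statement reads $\prob(\{\bx: l(\bx\mid\theta_1,\theta_0)=k\})=0$, but what your proof actually needs (and what you correctly use) is $\prob(\{\bx: l(\bx\mid\theta_1,\theta_0)=c_k\})=0$ with $c_k=\log\frac{k}{1-k}$, since the rejection region in \eqref{eq:classifier_fn_LRT} is defined by $l>c_k$. This appears to be a typo in the statement rather than a flaw in your argument; you handle the tie set correctly.
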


As a result,  the LRT provides an ideal tool for analyzing the optimality of TSC algorithms. The ROC curve of the LRT classifier provides an upper bound for the ROC curve of any TSC classifier. Similarly, the LRT classifier's accuracy provides an upper bound for other classifiers. 

The LRT classifier can be readily applied to time series with a computable likelihood ratio, and there is no training stage. The transition densities determine the likelihood ratio when the time series is sampled from a Markov process.  Suppose that for each $\theta_i$,  the transition probability of the Markov process has a density function $p(x_{t_{l+1}} \mid x_{t_{l}}, \theta_i)$ for each $l$. Then, the probability density function of a data path $x_{t_0:t_L} $ conditional on $\theta_i$ is  
\[ p(x_{t_0:t_L} \mid \theta_i) = \prod_{l=0}^{L-1} p(x_{t_{l+1}} \mid x_{t_l}, \theta_i),\] 
and the log-likelihood ratio of the path is 
\begin{align}\label{eq:likelihood_discrete}
l( x_{t_0:t_L} \mid \theta_1,\theta_0) & = \log  \frac{p(x_{t_0:t_L} \mid \theta_1) }{p(x_{t_0:t_L}\mid \theta_0)} =   \sum_{l=0}^{L-1} \log \frac{p(x_{t_{l+1}} \mid x_{t_{l}}, \theta_1) }{p(x_{t_{l+1}} \mid x_{t_{l}}, \theta_0) } . 
\end{align}

However, the transition probabilities and the likelihood ratio are unavailable for most time series, except for a few simple examples such as Gaussian processes and linear models (see Section \ref{sec:analytical-LRT}). In particular, to benchmark the performance of TSC algorithms, it is desirable to have nonlinear time series datasets with varying lengths, temporal sampling frequencies, and dimensions. The diffusions defined by stochastic differential equations provide a large class of such Markov processes.

\subsection{Distinguishing diffusions}
Diffusion processes provide a large class of time series whose likelihood ratio can be accurately computed.
An It\^o diffusion is defined by a stochastic differential equation 
\begin{equation}\label{eq:diffusion_para}
dX_t = b_\theta(X_t) dt  + \sigma(X_t)dB_t, 
\end{equation} 
where $B_t$ is a standard $\R^d$-valued Brownian motion. Here for simplicity, we assume that both the diffusion coefficient $\sigma:\R^d\to \R^{d\times d}$ and the drift $b_\theta:\R^d\to \R^d$ with parameter $\theta$ are Lipschitz, and the diffusion satisfies the uniform elliptic condition $\sum_{1\leq i,j\leq d} c_i c_j \sigma_{ki} \sigma_{kj} (x) \geq \gamma \sum_{i}c_i^2$ with $\gamma>0$ for all $x$ and $c_i\in \R$. Beyond such diffusions, we can also consider It\^o processes with $b$ and $\sigma$ being general stochastic processes satisfying suitable integrability conditions \cite{oksendal2013_sde}. 

The likelihood ratio of a sample path $x_{t_0:t_L}$ of the diffusion $X_{t_0:t_L} $ satisfying \eqref{eq:diffusion_para} can be computed by numerical approximation of the transition probabilities. In particular, when the temporal sampling frequency is high, i.e., $\max_{l} \{\Delta t_l= t_{l+1}-t_l\}$ is small,  the  Euler-Maruyama scheme
\[ \Delta X_{t_l} = X_{t_{l+1}}- X_{t_l} \approx b_{\theta_i}(X_{t_l}) \Delta t_l + \sigma(X_{t_l}) \Delta W_l\  \]
yields an accurate approximation of the transition probability
\[
\widehat{p }(X_{l+1} \mid X_{l}, \theta_i) \propto e^{-\frac{1}{2 \Delta t }  \| \Delta X_{t_l} - b_{\theta_i}(X_{t_l}) \Delta t_l ) \|_\Sigma^2 }, 
\]
where 
$\Sigma(x) = \sigma\sigma^\top(x) \in \R^{d\times d}$ and $\|z\|_\Sigma^2 = z^\top \Sigma^{-1} z$ for any $z\in \R^d$. 
Using it in \eqref{eq:likelihood_discrete}, we obtain an approximate likelihood ratio: 
\begin{equation} \label{eq:discrete_lrt}
\begin{aligned}
	& \widehat{l}( X_{t_0:t_L} \mid \theta_1,\theta_0) \\
	=  & \sum_{l=0}^{L-1}\left( [b_{\theta_1} - b_{\theta_0}] (X_{t_l})^\top \Sigma(Y_s)^{-1}  \Delta X_{t_l} - \frac{1}{2} [\|b_{\theta_1}\|_\Sigma^2- \| b_{\theta_0}\|_\Sigma^2](X_{t_l}) \Delta t_l \right). 
\end{aligned}
\end{equation}
As the temporal sampling frequency increases, i.e., $\max_l \{t_{l+1}-t_l\} \to 0$, the above likelihood ratio converges to the likelihood ratio of the continuous path $X_{[0,T]}$. The limit ratio is the Radon-Nikodym derivative between the two distributions of the path, as characterized by the Girsanov theorem 
(see Section \ref{sec:diffusion_Girsanov}):
\begin{equation}\label{eq:likelihood}
\begin{aligned}
& l( X_{[0,T]} \mid \theta_1,\theta_0 ) \\ 
 = & \int_0^T [b_{\theta_1} - b_{\theta_0}] (Y_s)^\top \Sigma(Y_s)^{-1} dY_s - \frac{1}{2}\int_0^T [\| b_{\theta_1} \|_\Sigma^2- \| b_{\theta_0}\|_\Sigma^2](X_t) dt. 
\end{aligned}
\end{equation}

There are three advantages to benchmarking TSC algorithms by diffusions. First, the LRT of the diffusion processes provides the theoretical optimal rates, which can be used to detect overfitting when training TSC classifiers. Second, the diffusions provide a large variety of testing time series data, whose length, sampling frequency, dimension, and nonlinearity can vary as needed. Third, numerical approximation can efficiently compute the likelihood ratio between diffusion processes as in \eqref{eq:discrete_lrt}.

\section{Examples with analytical likelihood ratios}\label{sec:analytical-LRT}
The likelihood ratio can be computed analytically for Brownian motions with constant drifts and the Ornstein-Uhlenbeck (OU) processes. In particular, these two examples offer insights into how classification accuracy depends on the temporal sampling frequency, length of paths, randomness, and the dimension of the time series data.  
\subsection{Brownian motions with constant drifts}	
Let  $(X_t, t\geq 0)$ be an $\R^d$-valued Brownian motion with a constant drift:  
\begin{equation}\label{eq:constant_drift}
	\begin{aligned}
		dX_t = \theta dt+ \sigma dB_t,  \quad  \Leftrightarrow \quad  X_t  & = X_{0} + \theta t + \sigma B_t, \\
	\end{aligned}
\end{equation}
where $\theta\in\{\theta_0, \theta_1\} \subset \R^d$ and the process $(B_t, t\geq 0)$ is the standard Brownian motion starting at $0$. 
Then, the exact log-likelihood ratio in \eqref{eq:likelihood_discrete} for a given sample path $X_{t_0:t_L}$ is  
\[
l(X_{t_0:t_L} \mid \theta_1,\theta_0) =  \sigma^{-2}\left[ (\theta_1-\theta_0)^\top(X_{t_L} - X_{t_0})- \frac{1}{2}(\lvert \theta_1 \rvert^2-\lvert\theta_0\rvert^2)(t_L-t_0)\right]. 
\] 
Note that $X_{t_L} - X_{t_0} = \theta(t_L-t_0) + \sigma (B_{t_L} - B_{t_0})$ for each $\theta$. 
Thus, conditional on the hypotheses $\theta=\theta_0$ and $\theta=\theta_1$, the likelihood ratios have distributions 
\begin{equation*}
	\begin{aligned}
		\text{Hypothesis } \theta=\theta_0:\quad    & l(X_{t_0:t_L} \mid \theta_1,\theta_0) \sim - m_l + v_l Z, \\
		\text{Hypothesis } \theta=\theta_1:\quad    & l(X_{t_0:t_L}  \mid \theta_1,\theta_0) \sim m_l + v_l Z ,
	\end{aligned}
\end{equation*}
where $Z$ is a standard Gaussian random variable and 
\[ m_l = \frac{1}{2} \lvert\theta_1-\theta_0\rvert^2(t_L-t_0), \quad v_l =\sigma \lvert\theta_1-\theta_0\rvert \sqrt{t_L-t_0}.\] 		
Let the rejection region be $R_k= \{X_{t_0:t_L}: l(X_{t_0:t_L} \mid \theta_1,\theta_0) >c_k \} $ with $c_k= \log\frac{k}{1-k}$ as defined in \eqref{eq:classifier_fn_LRT}.
Then, the false negative rate (FNR) and the true negative rate (TNR) of the LRT  are
\[
\begin{aligned}
	\text{FNR}(k)=	\alpha_k^0 & = \prob( x_{t_0:t_L} \in R_k \mid \theta_0 ) =  \prob(  Z > c_k v_l^{-1} + m_l v_l^{-1}  ) \\
	\text{TNR}(k)=     \alpha_k^1 & = \prob( x_{t_0:t_L} \in R_k \mid \theta_1) =  \prob(  Z> c_k v_l^{-1} - m_l v_l^{-1} ).
\end{aligned}
\]
Then,  the accuracy $\frac{1}{2}(1-\alpha_k^0 +\alpha_k^1)$ is
$ACC_k = \frac{1}{2} + \frac{1}{2} \prob \left( -m_l v_l^{-1} < Z- c_k v_l^{-1}  <  m_l v_l^{-1}\right) $.
Since $Z$ is centered Gaussian, the threshold maximizing the accuracy is $k_*= \argmax{k\in (0,1)} ( ACC_k ) = 0$. As a result, the maximal accuracy is 
\begin{align*}
	\quad ACC_{*}  & = \frac{1}{2} + \frac{1}{2} \prob \left( -m_l v_l^{-1} < Z <  m_l v_l^{-1} \right) \\
	& = \frac{1}{2} + \frac{1}{2} \prob \left( - \frac{1}{2\sigma} \lvert\theta_1-\theta_0\rvert \sqrt{  (t_L - t_0) } < Z <  \frac{1}{2\sigma}  \lvert\theta_1-\theta_0\rvert \sqrt{  (t_L - t_0) } \right). 
\end{align*}

The above FNR and TNR rates and the maximal accuracy depend on three factors: the path length $t_L-t_0$, the scale of the noise $\sigma$ (which affects the variance of the time series), and the distance $ \lvert\theta_1-\theta_0\rvert$ (which depends on the dimension $d$). As either $\sqrt{t_L-t_0}$, $ \lvert\theta_1-\theta_0\rvert$, or $\sigma^{-1}$ increases, the maximal accuracy increases. For example, when $ \theta_0 = a_0 [1,...,1]^\top $, and $ \theta_1 = a_1 [1,...,1]^\top$, $ \lvert\theta_1-\theta_0\rvert= d^{1/2}$, and the maximal accuracy is 
$$ ACC_{k_*}  =1- \prob(\lvert Z \rvert \geq \frac{1}{2\sigma}\lvert a_1-a_0 \rvert  \sqrt{d  (t_L - t_0) } ) .$$  

These rates and the maximal accuracy do not depend on the temporal sampling frequency of the time series because the likelihood ratio is exact. However, the temporal sampling frequency will affect the accuracy when the likelihood ratio is approximated numerically as in \eqref{eq:discrete_lrt}, particularly for nonlinear time series; see the numerical examples in Section \ref{sec:num}.

\subsection{Ornstein-Uhlenbeck processes}	
	Consider two $\R^d$-valued OU processes with parameters $\theta\in \{\theta_0, \theta_1\} \subset \R$:
	\begin{equation}\label{eq:ou}
		dX_t  = \theta X_t dt + \sigma dB_t   \,  \Leftrightarrow \,  X_{t+\Delta t}  = e^{ \theta \Delta t} X_t + \sigma \int_t^{t+\Delta t} e^{\theta (t+\Delta t-r)} dB_r 
	\end{equation}
for each $t>0$,	where $(B_t, t\geq 0)$ is an $\R^d$-valued standard Brownian motion and $\sigma>0$ is a constant. Then, conditional on $X_t$ and $\theta_i$, the random variable $X_{t+\Delta t}$ has a distribution $\mathcal{N} \left( X_t e^{\theta_i \Delta t}, \frac{\sigma^2}{2 \theta_i} \left( 1 - e^{2\theta_i \Delta t} \right) I _d \right)$, and the transition probability density of this Markov process is 
	\[
	p(x_{t+\Delta t} \mid x_{t}, \theta_i) = (2\pi \sigma_{i,\Delta t}^2)^{-d/2}  \exp{\left( - \frac{1}{\sigma_{i,\Delta t}^2} \|x_{t+\Delta t} - e^{2\theta_i \Delta t} y_t\|^2  \right)} 
	\]
	with $\sigma_{i,\Delta t}^2 = \frac{\sigma^2}{2 \theta_i} \left( 1 - e^{2\theta_i \Delta t } \right)$. 
	Let  $X_{t_0:t_L}$ be a discrete path with $t_l= l\Delta t$ for $0\leq l\leq L$. By the Markov property, the logarithm probability density of $X_{t_0:t_L}$  conditional on $\theta_i$ is  
	\begin{equation*}
		\begin{aligned}
	  \log p (X_{t_0 : t_L} \mid \theta_i) = C - \frac{d L}{2} \log(\sigma_{i, \Delta t}^2) - \frac{1}{ 2 \sigma_{i, \Delta t}^2 }  \sum_{l=0}^{L-1} \| X_{t_{l+1}} - e^{\theta_i \Delta t} X_{t_{l}} \|^2, 
		\end{aligned}
	\end{equation*}
	where $C$ is a constant. Thus, the log-likelihood ratio in \eqref{eq:likelihood_discrete} is 
	\[
	\begin{aligned}
		 l(X_{t_0:t_L} \mid \theta_1,\theta_0)  
		=  & \frac {d L} {2} \log \left( \frac{\sigma_{0, \Delta t}^2}{\sigma_{1, \Delta t}^2} \right) \\
		+ & \frac{1}{ 2 }  \sum_{l=0}^{L-1} \left( \frac{\| X_{t_{l+1}} - e^{\theta_0 \Delta t} X_{t_{l}} \|^2}{\sigma_{0, \Delta t}^2 } - \frac{\| X_{t_{l+1}} - e^{\theta_1 \Delta t} X_{t_{l}} \|^2}{\sigma_{1, \Delta t}^2 } \right).	
	\end{aligned}
	\]
	
	Let the rejection region be $R_k=\{X_{t_0:t_L}: l(X_{t_0:t_L} \mid \theta_1,\theta_0) > c_k  \}$. Note that conditional on $\theta_0$, $N_l := \frac{1}{\sigma_{0, \Delta t} } \left( X_{t_{l+1}} - e^{\theta_0 \Delta t} X_{t_{l}} \right)$ has a distribution $\mathcal{N} (0, I_d) $ for each $l$, and $X_{t_{l+1}} = e^{\theta_0 \Delta t} X_{t_{l}} +\sigma_{0, \Delta t}  N_l $. Then, with $Y_l= ( e^{\theta_1 \Delta t} - e^{\theta_0 \Delta t}) X_{t_{l}} + \sigma_{0, \Delta t} N_l $,  the false positive rate (FNR) is  
	\[
	\begin{aligned}
		\alpha_k^0 = & \prob \left( l(X_{t_0:t_L} \mid \theta_1,\theta_0) > c_k \mid \theta_0 \right)
		\\
		= & \prob \left(   \sum_{l=0}^{L-1} \big[ \| N_l \| ^2 - \sigma_{1, \Delta t}^{-2}   \|  Y_l\|^2 \big] > 2 c_k - d L \log \left( \frac{\sigma_{0, \Delta t}^2}{\sigma_{1, \Delta t}^2} \right)  \right), 
	\end{aligned}
	\]
	with $N_l\sim \mathcal{N} (0, I_d) $.  Similarly, denoting $Y_l' =( e^{\theta_0 \Delta t} - e^{\theta_1 \Delta t}) X_{t_{l}} + \sigma_{1, \Delta t} N_l  $,  we can compute the true negative rate (TNR)
	\[
	\begin{aligned}
		& \alpha_k^1 
		= \prob \left(    \sum_{l=0}^{L-1} \left[\sigma_{0, \Delta t}^{-2}  \|  Y_l'\|^2 -   \| N_l \| ^2 \right] > 2c_k - d L \log \left( \frac{\sigma_{0, \Delta t}^2}{\sigma_{1, \Delta t}^2} \right)  \right). 
	\end{aligned}
	\]
	The optimal threshold $k = \argmax{k}\frac{1}{2}(1-\alpha_k^0 +\alpha_k^1)$ depends on the various factors of the time series, so is the maximal accuracy. The numerical examples in Section \ref{sec:num} shows that the maximal accuracy increases as either $d$ or $L$ increases.

\section{Benchmark design: example diffusions 
} \label{sec:examples}
We demonstrate the construction of diffusions for TSC benchmarking with three representative examples. 
In each example, the procedure is straightforward: first, we construct pairs of diffusions through varying the drifts. Then, we generate data from these diffusions, and compute the statistics of LRT, which will be used as a reference for the performance of the state-of-the-art machine learning TSC algorithms in the next section.  
\subsection{Diffusions with different drifts}
Nonlinear diffusions can be constructed by varying the drifts $\{b_{\theta_i}\}_{i=0,1}$:  
\begin{equation}\label{eq:sde}
\begin{aligned}
	dX_t & =b_{\theta_i}(X_t)\, dt + \sigma(X_t) dB_t,  \quad b_{\theta_i}(X_t)=\sum_{j=1}^J \theta_{i,j}\phi_j(X_t), 
\end{aligned}
\end{equation}
where $X_t\in \R^d$, $\theta_i = (\theta_{i,1},\ldots, \theta_{i,J}) \in \R^J$ are the parameters, $\{\phi_j\}$ are \emph{pre-specified} basis functions, and  $(B_t, t\geq 0)$ is the standard Brownian motion in $\R^d$. Here the diffusion coefficient $\sigma(X_t)$ is the same for the two diffusions, representing either a multiplicative noise (when it depends on the state) or an additive noise (when it is a constant). To test the optimality of the TSC algorithms, we consider three pairs of nonlinear diffusions: gradient systems with different potentials, SDEs with linear and nonlinear drifts, and high-dimensional interacting particle systems with different interaction kernels.

\begin{myexample}[Different potentials] \label{example:potentials}
Consider two gradient systems with different potentials: a double-well potential $V_{\theta_0}(x) =\frac{1}{2} (|x|^2-1)^2$ and a single flat well-potential $V_{\theta_1}(x) =\frac{1}{4} |x|^4$: 
\begin{equation*}
	\begin{aligned}
		dX_t  & = - \nabla V_{\theta_i}(X_t) dt +  dB_t.\\
	\end{aligned}
\end{equation*}
Writing them in the parametric form $V_{\theta_i}(X) = \sum_{j=0}^4 \theta_{i,j} |x|^j$ with $\theta_{i}=(\theta_{i,1},\ldots,\theta_{i,4})$, we have $\theta_0 =\frac{1}{4}(1,0,-2,0,1) $ and $\theta_1 =( 0, 0, 0,0,\frac{1}{4}) $. 
\begin{figure}[h] 
	\centering
  \includegraphics[width =.9\textwidth]{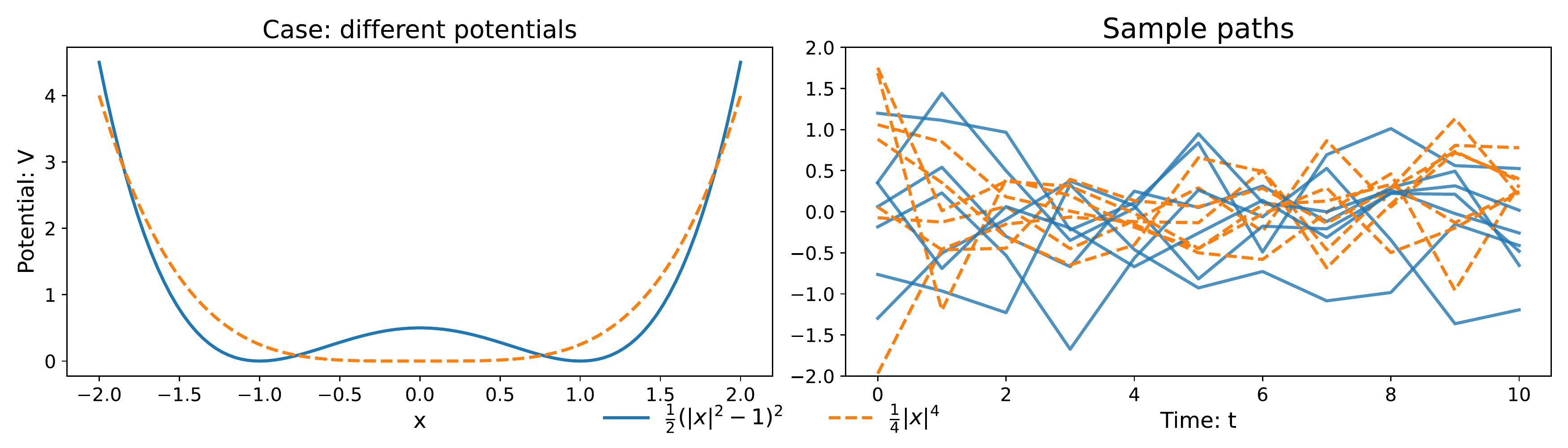} \ifarXiv \vspace{-3mm} \fi
	\caption{Different potentials in Example \ref{example:potentials} and a few sample paths.  }	\label{fig:2potentials}  \ifjournal \vspace{-6mm} \fi
\end{figure}
\end{myexample}
The double-well potential is a widely-used prototype model for systems with metastable states \cite{pavliotis2014stochastic}.  These two potentials are visually different; see Figure \ref{fig:2potentials} (left). Each potential is confining and leads to an ergodic process with a stationary distribution. Thus, 
long sample paths that explore the full landscape of the potentials can distinguish the diffusions from the empirical densities. However, the short sample paths look similar and are difficult to distinguish, as shown in Figure \ref{fig:2potentials} (right).  

\begin{figure}[h]  \vspace{-2mm}
\centering
  \includegraphics[width =.9\textwidth]{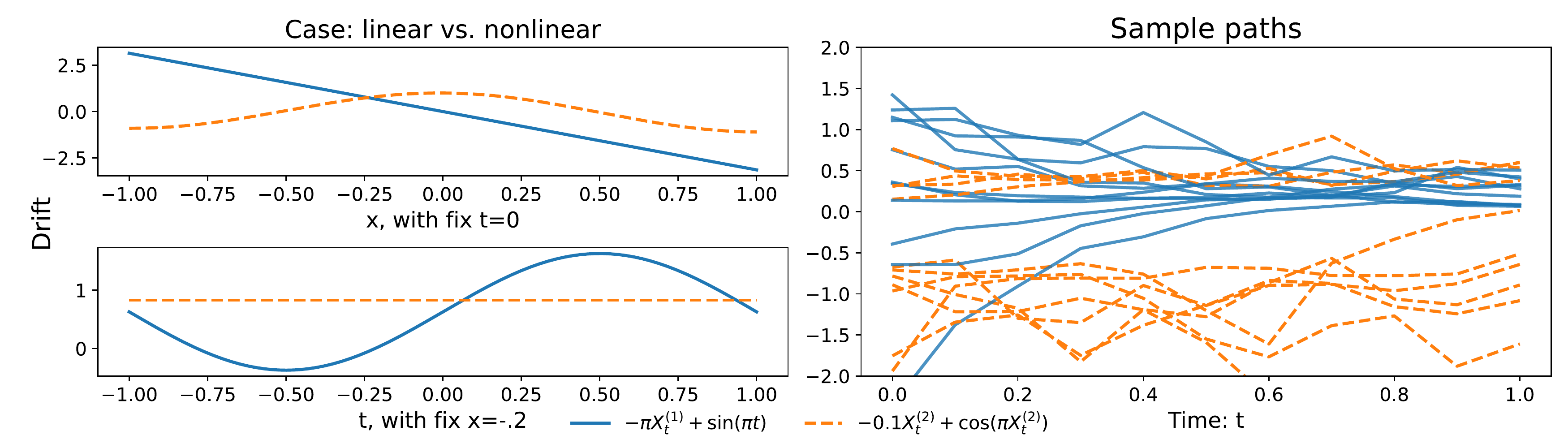}  \ifarXiv \vspace{-3mm} \fi
\caption{Linear v.s.~nonlinear drifts in Example \ref{example:linear_nonlinear}  and a few typical sample paths.  }
\label{fig:example_4_2} \ifjournal \vspace{-6mm} \fi
\end{figure}
\begin{myexample}[Linear v.s.~nonlinear drifts]\label{example:linear_nonlinear} 
Consider two 1D It\^o processes 
\begin{equation*}
	dX_t = b_\theta(t,X_t) dt + X_t dB_t
\end{equation*}
with linear and nonlinear drifts $b_{\theta_0}(t, x) = -\pi x + \sin(\pi t)$ and $b_{\theta_1}(t, x) = -0.1 x + \cos(\pi x)$, which can be written as $ b_{\theta_i}(t, x) = \theta_{i,1} x +  \theta_{i,2}  \cos(\pi x)) + \theta_{i,3} \sin(\pi t)) $ with  $\theta_0 =(-\pi,0,1) $ and $\theta_1 =( -0.1, 1,0) $. 
\end{myexample}
The two drift functions are visibly different, since $b_{\theta_0}(t, x)$ is linear in $x$ and the other is nonlinear in $x$. Their sample paths are also visually different: the sample paths of $b_{\theta_0}$ are smoother than those of $b_{\theta_1}$'s (they decay faster); see Figure \ref{fig:example_4_2}. Thus, we expect that all TSC algorithms can achieve a high accuracy.

\begin{myexample}[Interacting particles]\label{example:IPS}
Consider a system with $N$ interacting agents with $X_t^j\in \R^{d_1}$ denoting the position or opinion of the $j$-th agent at time $t$. Suppose that the agents interact with each other according to the following stochastic differential equation: 
\begin{equation*}
	d X_{t}^j= \frac 1 N \mathop \sum_{j=1}^N \phi_\theta(\|X_{t}^j-X_{t}^i\|)(X_{t}^j-X_{t}^i) + \sigma dB_t^j, 
\end{equation*}
where $\phi_\theta:\R^+\to \R$ is the interaction kernel, $\{B_t^j, j=1,\ldots, N\} $ are independent standard Brownian motions, and $\sigma>0$ is a scalar for the strength of the stochastic force. We will consider two types of interaction kernels (see Figure {\rm \ref{fig:example_4_3} (left)})
\begin{equation*}
	\phi_{\theta_0}(r) =  \left\lbrace\begin{aligned}
		& 0.2 , \ \ \ \ \ \ \ &r \in [0,\sqrt{2}),
		\\
		& 2 , \ \ \ &r \in [\sqrt{2},2),
		\\
		& 0 , \ \ \ & r \in [2,\infty).
	\end{aligned}\right.  \ \ \ \ \ \ \  
	\phi_{\theta_1}(r) =  \left\lbrace\begin{aligned}
		& 2 , \ \ \ \ \ \ \ &r \in [0,\sqrt{2}),
		\\
		& 0.2 , \ \ \ &r \in [\sqrt{2},2),
		\\
		& 0 , \ \ \ & r \in [2,\infty).
	\end{aligned}\right. 
\end{equation*}
This system leads to high-dimensional data, with $X_t = (X_t^1,\ldots, X_t^N)\in \R^{d}$ with $d=d_1N$. We will consider $d_1=2$ and $\sigma=1$ with $N$ varying to change the dimension of the system.  
\end{myexample}
\begin{figure}[h] \vspace{-3mm}
\centering
  \includegraphics[width =.9\textwidth]{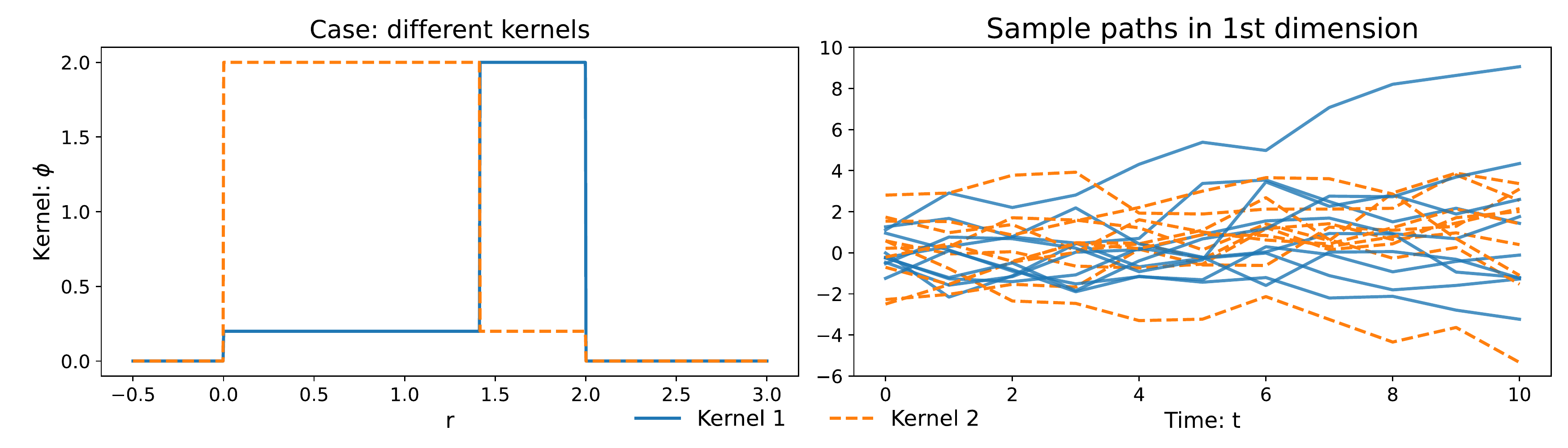}  \ifarXiv \vspace{-3mm} \fi
\caption{Interaction kernels in Example \ref{example:IPS} and sample paths of the 1st dimension of an agent.}
\label{fig:example_4_3}  \ifjournal \vspace{-6mm} \fi
\end{figure}

Such interacting particle systems have been increasingly studied because of their wide-range of applications in biology, engineering, and social science (see e.g.,\cite{Krause2000,bell2005_ParticlebasedSimulation,MT14,LMT21}). The difference between the two kernels is the strength of interaction between ``far'' and ``close'' neighbors:  the kernel $\phi_{\theta_1}$ makes the close neighbors interact stronger than those far away, whereas the kernel $\phi_{\theta_0}$ makes the far neighbors interact stronger than those nearby. Then, the dynamics of the two systems are different, and it is shown in \cite{MT14} that the more heterophilious kernel $\phi_{\theta_0}$ enhances consensus when there is no stochastic force (i.e., the systems is deterministic). As a result, it is relatively easy to distinguish the two diffusions when the stochastic force is small. On the other hand, when the stochastic force is relatively large (e.g., $\sigma=1$), the sample paths of the agents in the two systems are similar (Figure \ref{fig:example_4_3} (right)), making the classification a difficult task.

\subsection{Data generation and the LRT benchmarks}

The simulated diffusion processes allow us to test the dependence of classification performance on three parameters: path length in time $t_L$, the dimension $d$ of the state, and temporal sampling frequency (by varying the time gap $\Delta t$). We test each of the three parameters with four values using two diffusion models, thus in total we generate 24 datasets in 6 cases with these parameters specified in Table  \ref{tab:settings}.   
\begin{table}
\caption{Settings of the time series data in numerical tests.}\label{tab:settings} \ifarXiv \vspace{-2mm}\fi
{\centering
\begin{tabular}{ l | l l l l l  }		
	Model & $ d $ & $ L $ & $ t_L $,  $ \Delta t $   \\ 
	\hline	
	{\bf a)} Constant drifts  & $1$ & $\{10, 20, 40, 80\}$ & $\{1, 2, 4, 8\}$,  $ 0.1 $ \\
	{\bf b)} Different potentials & $1$ & $\{20, 40, 80, 160\}$ & $\{2, 4, 8, 16\}$, $ 0.1 $ \\
	{\bf c)} OU processes & $\{1,2,4,8\}$ & $20$ & $ 2 $ , $ 0.1 $ \\
	{\bf d)} Interacting particles  & $\{6,12,24,48\}$ & $20$ & $ 2 $, $ 0.1 $ \\
	{\bf e)} Linear v.s.~nonlinear   & $1$ & $\{5, 10, 20, 40\}$ & $ 1$,  $0.1\times\{2, 1, 0.5, 0.25\}$  \\	
	{\bf f)} Interacting particles & $24$ & $\{ 10, 20, 40, 80 \}$ & $ 4$, $0.1\times \{ 4, 2, 1, 0.5 \} $ \\
\end{tabular} \\
}
 {\footnotesize $^*$ The models ``Constant drifts'' and ``OU processes''  are defined in Equations \eqref{eq:constant_drift} and  \eqref{eq:ou}, and the models ``Different potentials'', ``Interacting particles'' and ``Linear v.s.~nonlinear'' are defined in Examples \ref{example:potentials}--\ref{example:IPS}.
 } 
\end{table}

In each dataset, the training data consists of $M=2000$ sample trajectories $ \{X^{(m)}_{t_0:t_L}\}_{m=1}^M $ of the pair of $\R^d$-valued diffusions with $\theta\in \{\theta_0,\theta_1\}$, 1000 paths for each of the pair. Here the time instances are $t_l= l\Delta t$, and these data paths are downsampled from the solutions of the SDEs simulated by the Euler-Maruyama scheme with a fine time step $\delta =0.01$. For example, the path with $\Delta t =0.1$ makes an observation every $10$ time steps from the fine simulated solution. 
The initial conditions $\{X^{(m)}_{t_0}\}_{m=1}^M$ are sampled from the standard normal distribution in $\R^d$. Each sample path is augmented with its time grid $ t_0:t_L $ with $t_0=0$.

For each dataset, we obtain two types of LRT benchmarks by computing the LRT in two ways: one using the fine paths and the other using the time series dataset, both compute the likelihood ratio using the Euler-Maruyama approximation in \eqref{eq:discrete_lrt}. Since there is no need of training, each classifier makes a prediction directly on the whole dataset of $M$ paths, and returns a single ROC curve, AUC and ACC$_*$, which will be used as references. 

The LRT classifier using the fine solution is called ``\emph{LRT hidden truth''}, and it provides the optimal classification rates by the Neyman--Person lemma (see Theorem \ref{thm:NP-lemma}). The other LRT classifier using the training data is called ``\emph{LRT numerical}''. It does not use the hidden fine path, but it uses the diffusion model information that are not used by the TSC algorithms. It has a relatively large numerical error when the SDE is nonlinear, particularly when the observation time interval $\Delta t$ is much larger than the simulation time step $\delta$. Thus, it provides a lower baseline for the TSC algorithms.  The two LRT benchmarks are the same when the time series are samples of a Gaussian process from a linear SDE, e.g., the cases of Brownian motions with constant drifts and OU processes.

\subsection{Discussions on benchmark design}
The LRT benchmark design has two main components: selection of the diffusion processes and generation of simulated data. In addition to the examples in Table \ref{tab:settings}, there is a large variety of diffusion processes from stochastic differential equations in the form of \eqref{eq:sde}, such as gradient systems and stochastic Hamiltonian systems \cite{pavliotis2014stochastic,oksendal2013_sde}. The two diffusions should have the same diffusion coefficient, so that the likelihood ratio can be computed based on the Girsanov theorem. 

To generate simulated data, we recommend using the Euler-Maruyama scheme so that the likelihood ratio of the fine trajectory is exact. The time series data are downsampled from the fine trajectories. It is helpful to compute two LRT benchmarks, one using the fine trajectories and the other using the downsampled data, to provide an optimality benchmark and a lower baseline benchmark. In particular, the optimality benchmark can detect the overfitting of a TSC algorithm in the training stage. 

Four parameters can be tuned to adjust the theoretical classification accuracy: the time length of paths, the dimension, the temporal sampling frequency, and the strength of the driving noise (as suggested by the analysis in Section \ref{sec:analytical-LRT}). The time length of paths and the dimension affect the effective sample size and hence the classification rates. The temporal sampling frequency affects the LRT baseline but it may have a limited effect on the model-agnostic TSC algorithms. At last, a large noise dims the signal from the drifts, thus lowering the accuracy of classification.

\section{Benchmarking random forest, ROCKET and ResNet}\label{sec:num}
\subsection{Random forest, ROCKET, and ResNet}
We benchmark three scalable TSC methods: random forest \cite{breiman2001random}, ROCKET \cite{dempster2020rocket}, and ResNet \cite{wang2017time}. They have been shown to be state-of-the-art in recent review papers \cite{bagnall2017great,ruiz2021great,ismail2019deep}. In particular, the most recent review \cite{ruiz2021great} compares 11 multivariate time series classifiers that are top-performers in \cite{bagnall2017great,ismail2019deep},  including both non-deep learning methods (including  ROCKET and HIVE-COTE (Hierarchical Vote Collective Of Transformation-based Ensembles) \cite{lines2016hive}) and deep learning methods (including ResNet and InceptionTime \cite{ismail2020inceptiontime}), using 26 UEA archive datasets \cite{bagnall2018uea}. The recommended method is ROCKET due to its high overall accuracy and remarkably fast training time. 

\paragraph{Random Forest.} The random forest (RF) is an ensemble learning technique that combines a large number of decision trees, and it is applicable to both classification and regression. The original RF described by \cite{breiman2001random} is a classifier consisting of a collection of tree-structured classifiers $\{f(\bx,\beta_i)\}_{i=1}^{n_T}$ with independent identically distributed parameters $\beta_i$ and each tree casts a unit vote for the input $\bx$ to be in a class. These votes lead to a function $f_\beta(\bx) = \frac{1}{n_T}\sum_{1=1}^{n_T} f(\bx,\beta_i)$ approximating the probability of $\bx$ being in the class (i.e., the probability $\prob(\theta= \theta_1\mid \bx)$ of $\bx$ in the class  $\theta_1$ in our notation in Section \ref{sec:tsc_learning}). The classifier function with a threshold $k$ is $F(\bx,k)$ as in \eqref{eq:classifier_function}. 
It is user-friendly with only a few parameters easy to tune to achieve robust performance, and its performance is comparable to other classifiers such as discriminate analysis, support vector machine and neural networks \cite{liaw2002classification,probst2017tune}. 

We use the default HalvingRandomSearchCV strategy in scikit-learn \cite{scikit-learn} to search for parameter values in the ranges listed below. 
\begin{center}
\begin{tabular}{ l | l l l l l  }		
	 & \# of trees & max depth & max features & min SS & bootstrap   \\
	\hline	
	RF & $ \{ 10:100 \} $ & $ \{ 3, \text{None} \} $ & $ \{ 1 : 11 \}$ & $ \{ 2 : 11 \} $ & $ \{ \text{True}, \text{False} \} $ ,  
\end{tabular}
\end{center}
where ``min SS'' represents minimal samples split, and the quality of a split is measured by the Gini index 
Note that number of trees is medium so as to have a comparable computational cost with other methods.

\paragraph{ResNet.} The deep residual network (ResNet) for time series classification \cite{wang2017time} is a network with three consecutive blocks, each comprised of three convolutional layers, followed by a global average pooling layer and a final dense layer with softmax activation function. The major characteristic is that the three consecutive blocks are connected by residual ``shortcut'' connections, enabling the flow of the gradient directly through them, thus reducing the vanishing gradient effect \cite{he2016deep}. It outperforms other deep learning time series classifiers in \cite{ismail2019deep}, especially for univariate datasets \cite{wang2017time}. 

We maintain all hyper-parameter settings from \cite{ismail2019deep}. 
\begin{center}
\begin{tabular}{ l | l l l l l }		
	Structure & layers & activate & normalize & residue & dropout \\
	\hline	
	ResNet & 9+2 & ReLU & batch & between blocks & none. \\
\end{tabular}
\end{center}
There are nine convolution layers in the three blocks, each with the ReLU activation function that is preceded by a batch normalization operation. The number of filters in each convolution layer is 64 in the first block; while the number is 128 for the second and third blocks. In each residual block, the kernel size (or the length of the filter) is set to 8, 5 and 3 respectively for the first, second and third convolution. 
The optimization settings are also similar to \cite{ismail2019deep}: 
\begin{center}
\begin{tabular}{ l | l l l l l l }		
	Training & optimizer & rlr & epochs & batch & learning rate & weight decay \\
	\hline	
	ResNet & Adam & yes & $150$ & $16$ & $0.001$ & $0.0$ \, ,  \\
\end{tabular}
\end{center}
where `` rlr" means that the learning rate is reduced by half if the model's training loss has not improved for $5$ consecutive epochs with a minimum learning rate set to $ 0.0001 $. Here we set the epochs to 150 to have a computational cost comparable with other methods while maintaining accuracy. 

\paragraph{ROCKET.} The ROCKET (Random Convolutional Kernel Transform) \cite{dempster2020rocket} is the current state-of-the-art multivariate time series classifier \cite{ruiz2021great}. It uses random transformations followed by a linear classifier (ridge regression or logistic regression). 
In the transformation part, a large number of random convolution kernels are applied to each time series, each kernel producing a feature map. From each of these feature maps, two features are extracted: the maximal value and the proportion of positive value (ppv). Thus, each random kernel extracts two features from each time series. The linear classifier then makes classification based on these features. 

We keep the default setting for ROCKET in the \emph{sktime} reposIt\^ory \footnote{\url{https://github.com/alan-turing-institute/sktime/blob/master/sktime/transformers/series_as_features/rocket.py.}}, and we use the ridge regression (the parameter regularization strength $\alpha$ is searched by the build-in function RidgeCV).
The randomness comes from the kernel's parameters: length, weights, bias, dilation, and padding: 
\begin{center}
\begin{tabular}{ l | l l l l l }		
	Kernel & length & weight & dilation & padding or not & stride \\
	\hline	
	ROCKET & $ \{ 7, 9, 11 \} $ & $\mathcal{N} (0,1) $ & $\floor{2^x}$ & equal probability & $1$.
\end{tabular}
\end{center}
Here $ x \sim \mathcal{N} (0,A)$ with $ A=\log_2 \frac{l_{\text{input}}-1 }{l_{\text{kernel}}-1} $, where $l_{\text{input}}$ and $l_{\text{kernel}}$ are the lengths of the time series and the kernel.  
The number of kernels is set to $10000$, resulting in $20000$ features for each time series.

\subsection{ROC curves in a typical test}\label{sec:roc}
We compare the performance of these TSC algorithms with the LRT benchmarks in three statistics: the ROC curve in a typical test, the box-and-whisker plots of AUC (area under the ROC curve) and the optimal accuracy (denoted by ACC$_{*}$) in $40$ different runs. In each run, we train the algorithms using randomly sampled 3/4 of the data paths and use the rest 1/4 of the data for prediction test. Thus, each algorithm is trained using $M_{\text{training}} = \frac{3}{4}M = 1500$ sample paths and the rates in prediction are computed using $\frac{1}{4} M = 500$ sample paths. By Lemma \ref{lemma:sampling_error_inTest},  each prediction rate has a standard deviation at the scale of $\frac{0.5}{\sqrt{500}}= 0.02 $. Thus, two algorithms perform similarly if the difference between their rates are within the sampling error of 0.04 (in two standard deviations).  

\begin{figure} [htb] 
\centering 
\includegraphics[width =1\textwidth]{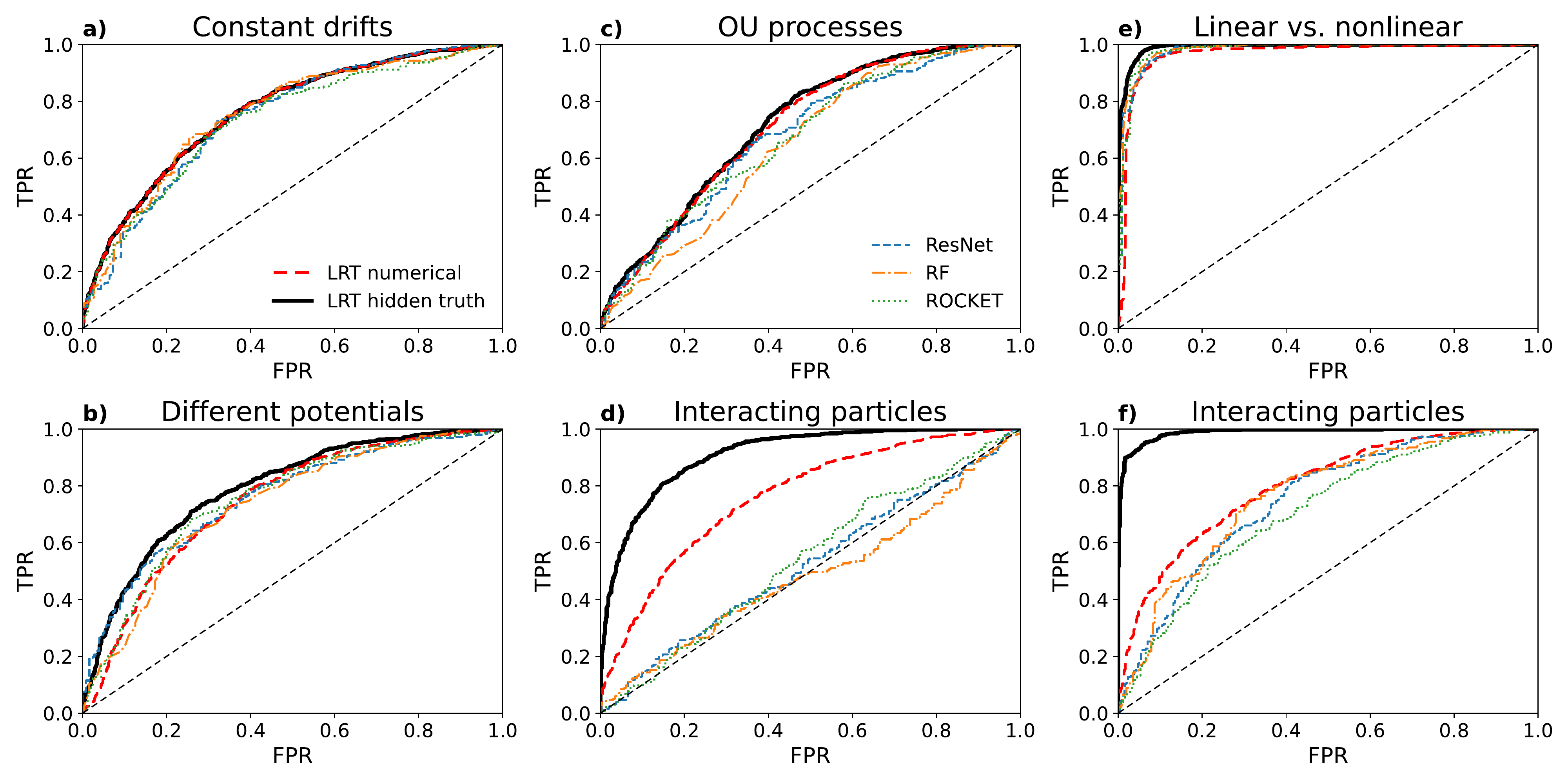}  \vspace{-6mm}
\caption{ROC curves in a typical test in each of the 6 cases (each using the first of the settings in Table \ref{tab:settings}). The three algorithms achieve the optimal LRT in Case (a), and they are in-between the two LRT benchmarks in Cases (b,e). 
They are suboptimal in comparison with the ``LRT numerical'' in Cases (c,f) and they have unsuccessful classification in Case (d).     } 
\label{fig:ROC}
\end{figure}

Figure \ref{fig:ROC} shows the ROC curves in a typical test in each of the 6 cases in the Table \ref{tab:settings}. Each case uses its first of the four settings, e.g., the constant drifts dataset has $(t_L,d,\Delta t)= (1,1,0.1)$, the dataset for different potentials has $(t_L,d,\Delta t)= (20,1,0.1)$, and the OU processes dataset in Case (c) has $(t_L,d,\Delta t)= (2,1,0.1)$. The datasets for the interacting particles in Cases (d) and (f) have $(t_L,d,\Delta t)= (2,6,0.1)$ and $(t_L,d,\Delta t)= (4,24,0.4)$,  respectively.    

For univariate time series in the Cases (a,b,e), the three algorithms either reach or are close to the optimality benchmark by the LRT. They achieve the optimal benchmark of ``LRT hidden truth'' for the Brownian motion with constant drifts. They are nearly optimal with curves in-between the two LRT benchmarks in distinguishing the diffusions with different potentials and the diffusions with linear or nonlinear drifts.  

For the univariate time series in Case (c) and the multivariate time series in Case (f), the three algorithms are suboptimal as their ROC curves are below the ``LRT numerical'' with $\Delta t = 0.1$ and $\Delta t = 0.4$, respectively. 

However, the three algorithms have unsuccessful classifications in Case (d), which is the multivariate interacting particles with $(t_L,d,\Delta t)= (2,6,0.1)$. Their ROC curves are around the diagonal line. In contrast, the benchmark of ``LRT numerical'' with $\Delta t = 0.1$ has a reasonable ROC curve and the ROC curve of ``LRT hidden truth'' is much higher.  Thus, the data has rich information for the classification, and 
there is room for improvements in these algorithms. We note that the LRT makes use of the model information while the three algorithms are model agnostic. Hence, the success of the ``LRT numerical'' shows the importance of model information in the classification of nonlinear multivariate time series.

In particular, the contrast between the failure in Case (c) and the success in Case (f) invites further examination of the factors that affect the performance of the algorithms. Note that both Case (d) and Case (f) are for the interacting particle systems, and they are different only at $(t_L,d,\Delta t)= (2,6,0.1)$ and $(t_L,d,\Delta t)= (4,24,0.4)$. Thus, in the next section, we examine the algorithms with varying $(t_L,d,\Delta t)$. We will also examine the  dependence of the classification accuracy on randomness and training sample size (Figure \ref{fig:IPS_tL_M}). 
Additionally, a single test is insufficient to draw a conclusive comparison because of the randomness in the data; hence, we run multiple tests in each setting and report the statistics of AUC and ACC in the next section to benchmark the optimality.  

Also, one may notice that the random forest lags behind the other two in Case (c) and the ROCKET lags behind in Case (f), both with rate differences larger than two theoretical standard deviations (0.04). Such differences are due to the randomness in the data in this single test, the statistics from multiple tests in the next section show that no method is superior in all settings.


\begin{figure} [htb] \vspace{-3mm}
\centering
\includegraphics[width =.98\textwidth]{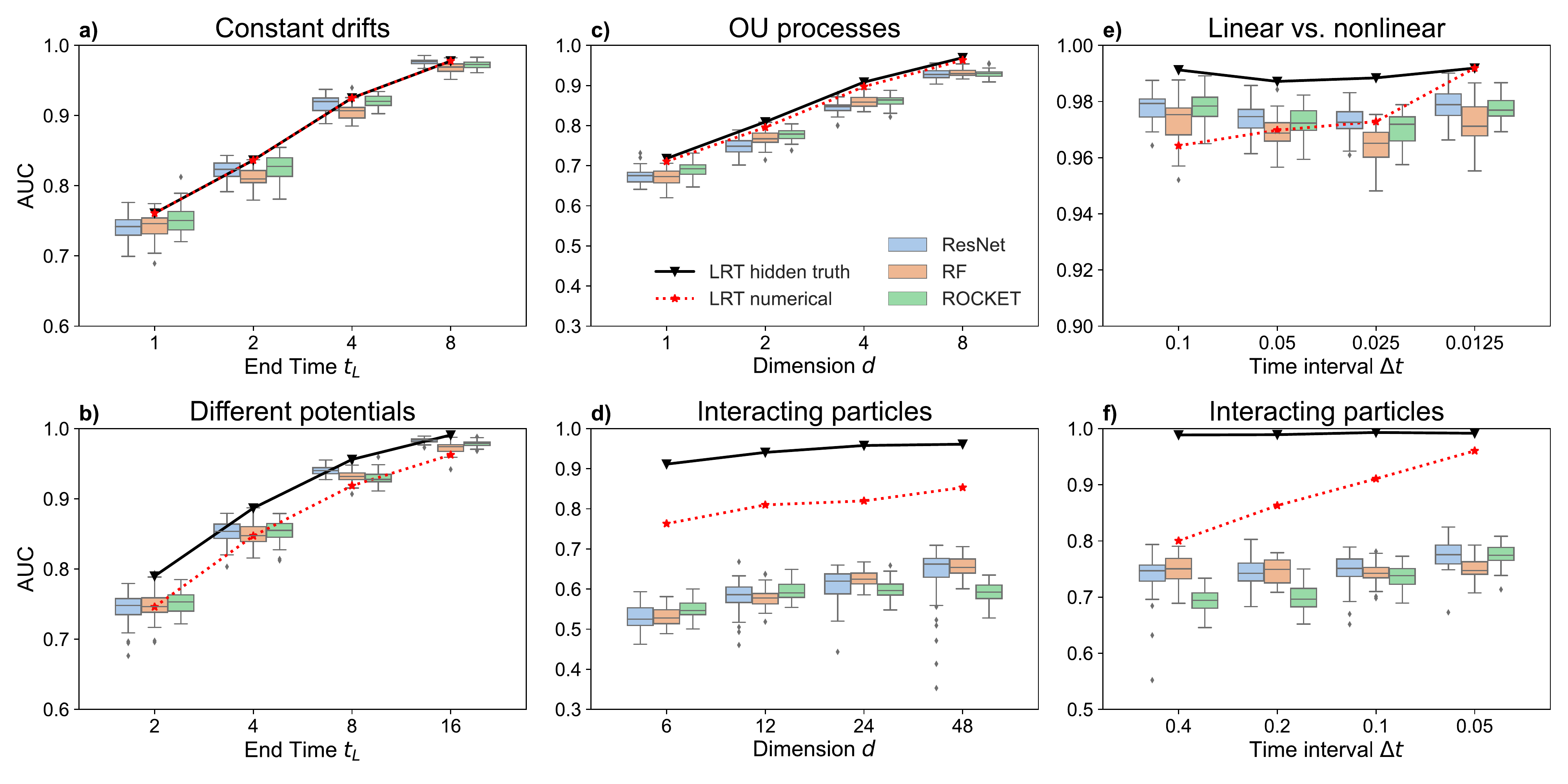} \ifarXiv  \vspace{-4mm} \fi
\caption{AUC for the 6 Cases with varying $(t_L,d,\Delta t)$ in Table \ref{tab:settings}. 
All three algorithms perform similarly: they reach the optimal LRT for Gaussian processes in Case (a), and they are near-optimal in Cases (b,e), 
 suboptimal in Cases (c,f), and are unsuccessful in Case (d). 
 } 
\label{fig:AUC}
\end{figure}
\begin{figure} [h!]  \vspace{-3mm}
\centering
\includegraphics[width =.96\textwidth]{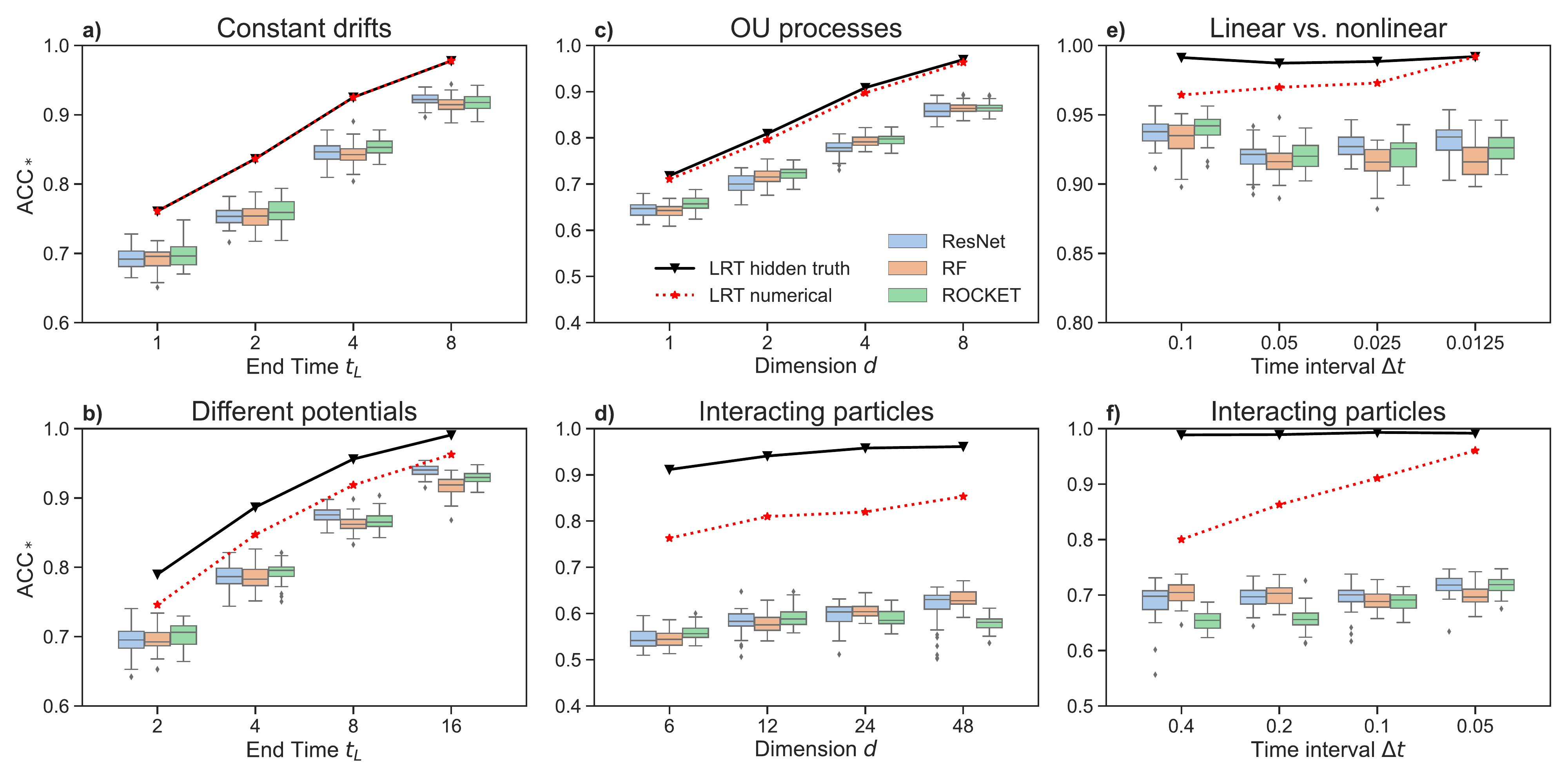}    \ifarXiv  \vspace{-4mm} \fi
\caption{Maximal accuracy ($ACC_{*}$) with varying $(t_L,d,\Delta t)$ in Table \ref{tab:settings}. All three algorithms are suboptimal in comparison with the LRT benchmarks. 
 }  \vspace{-3mm}
\label{fig:ACC}
\end{figure}

\subsection{Optimality benchmarking in AUC and maximal accuracy}
We benchmark the optimality of a classifier by examining the statistics of the AUC and optimal accuracy (ACC$_*$) in 40 independent simulations for every 4 settings of the 6 cases in Table \ref{tab:settings}. We present the box-and-whisker plots (the minimum, the maximum, the sample median, the first and third quartiles, and the outliers) of the AUC and ACC$_*$, which reflect the randomness in the classifications. 

Recall that the ``LRT hidden truth'' provides an upper bound of optimality and the ``LRT numerical'' provides a low baseline for them. Thus, a classifier achieves the optimality for the Gaussian processes if its AUC and ACC$_*$ concentrate around the ``LRT hidden truth''. A classifier is \emph{suboptimal} if its AUC or ACC$_*$ is below the baseline of ``LRT numerical'', particularly when the temporal sampling frequency of observation is relatively low.  We say it is \emph{near optimal} when its statistics lie in between the benchmark lines, particularly when the two lines are close.

Figure \ref{fig:AUC} shows the statistics of the AUCs in the six cases with varying path time lengths $t_L$, dimension $d$ and temporal sampling frequency (through $\Delta t$). In the case of univariate time series data, the three algorithms achieve the optimality represented by the LRT hidden truth for the Gaussian process in Case (a), and they are near optimal for nonlinear time series in Cases (b,e). They are unsuccessful in all settings in Case (d), the high-dimensional interacting particle system with short sample paths, and they are suboptimal in Cases (c,f). These results agree with those from the ROC curves. 

Additionally, we notice two patterns. (i) The AUC increases as the path length in time $t_L$ or the dimension $d$ increases, which can be clearly seen in Cases (a,b,e,d).  (ii)  The AUC of the three methods is not sensitive to the temporal sampling frequency of observation, because Cases (e,f) show that the AUC changes insignificantly as $\Delta t$ refines.  Note that the slopes of the LRT benchmarks in Case (c) are much steeper than those in Case (d). This is because the entries of the OU processes are independent, whereas the entries of the interacting particles are correlated through the interactions. Thus, the increment of AUC is due to the increased effective sample size through either $d$ or $t_L$. Such patterns of AUC's dependence on path length and sample size will be further examined in Figure \ref{fig:IPS_tL_M} for the interacting particle systems.   

 Figure \ref{fig:ACC} shows the statistics of the maximal accuracy ($ACC_*$) in the six cases. It turns out that all three algorithms have smaller maximal accuracy than the benchmark of ``LRT numerical'' (not to mention the ``LRT hidden truth''). Thus, there is room for their improvement.  On the other hand, the two patterns on the dependence of $(t_L,d,\Delta)$ are similar to those observed in AUC in Figure \ref{fig:AUC}.

Figure \ref{fig:Duration} shows the statistics of the computation time in training of these tests. The computation is carried out on a node of 
3.0GHz Intel Cascade Lake 6248R with 48cores, 192GB RAM  1TB NMVe local SSD. 
The figure shows that the random forest (RF) has a controlled computation time for all cases. The computation time of either ResNet or the ROCKET increases in the path length ($L= \frac{t_L}{\Delta t}$) as shown in Cases (a,b,e,f), and is not sensitive to the dimension $d$ as Cases (c,d) suggests. The ResNet has the largest computation time in most cases. 
The LRT benchmarks are not shown here because their computation time is negligible (since they only involve the evaluations of the likelihood ratio). 


\begin{figure} [thb] \vspace{-3mm}
\centering 
\includegraphics[width =.96\textwidth]{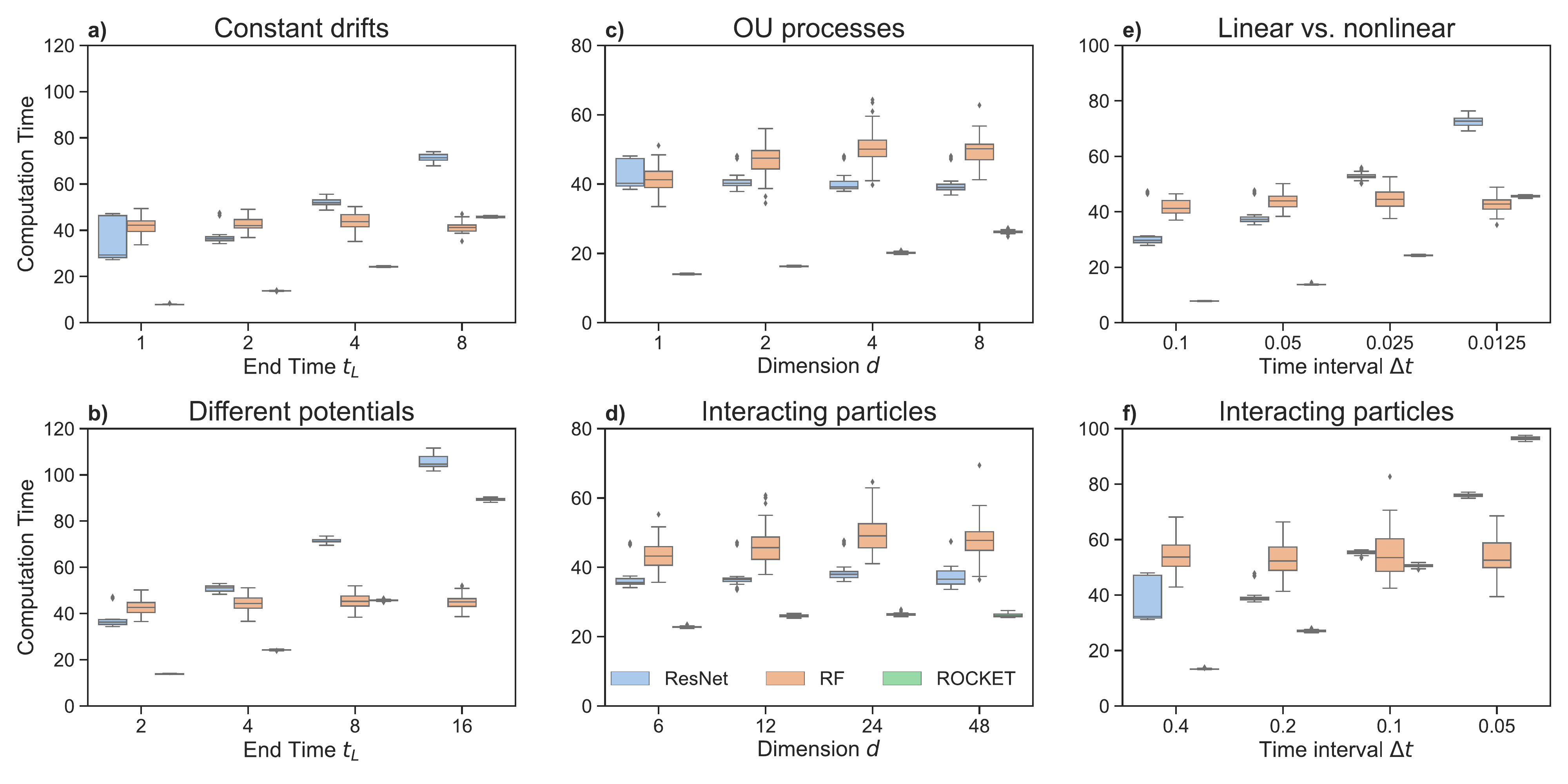}  \ifarXiv  \vspace{-3mm} \fi
\caption{Computation time (in seconds) for the tests with varying $(t_L,d,\Delta t)$ in Table \ref{tab:settings}. The random forest (RF) has a controlled computation time. The ResNet and the ROCKET have computation times increasing with the path length ($L= \frac{t_L}{\Delta t}$) in Cases (a,b,e,f), and not sensitive to the dimension $d$ in Cases (c,d). The ROCKET has the smallest computation time when the  length $L$ is not large. 
}  \vspace{-2mm}
\label{fig:Duration}
\end{figure}

Figure \ref{fig:IPS_tL_M} further examines the dependence of the classification performance on the path length $t_L$, the randomness (in terms of $\sigma$), and the training sample size in Cases \textbf{a)}-- \textbf{c)}, respectively, for the interacting particle systems. 
 These cases show that the AUC of each method increases when either the path time length increases, or the randomness decreases, or the training sample size increases. In particular, Case \textbf{c)} shows that a growing training sample size can significantly improve the AUC of each algorithm; yet, with a training sample size of 4000, their AUCs are far below the LRT benchmarks (which do not need to be trained by taking into account the model information). Additionally, we note that the variation of each algorithm reduces as the sample size increases, indicating that the learning error decays in the sample size. The ResNet has the largest variation among the three algorithms, but its performance improves the most when the sample size increases.

\begin{figure} [thb] \vspace{-1mm}
\centering 
\includegraphics[width =.96\textwidth]{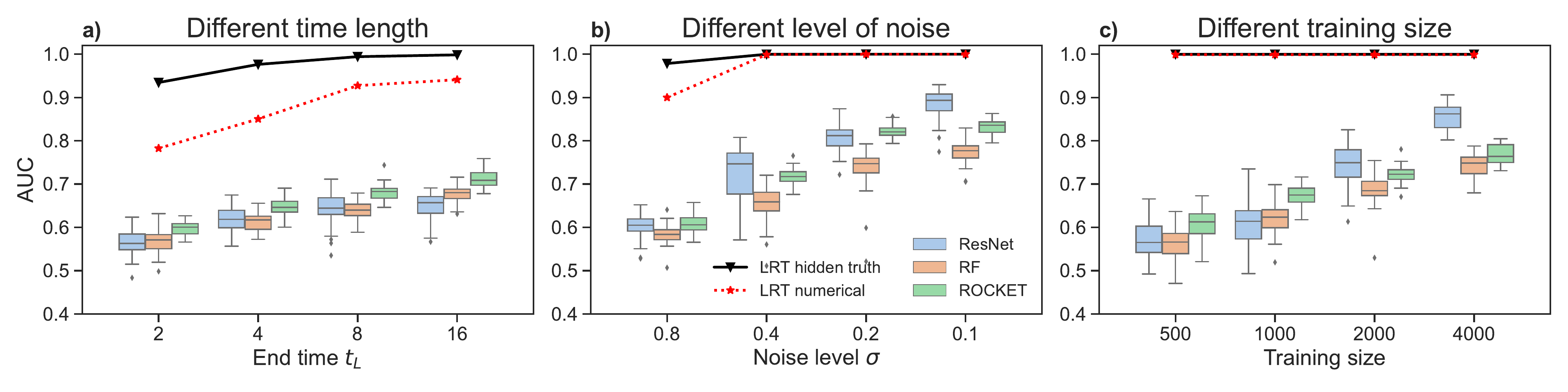}  \ifarXiv  \vspace{-3mm} \fi
\caption{AUC for interacting particles in three additional settings: \textbf{a)}: $t_L\in\{2,4,8,16\}$, $\sigma= 1$ and $M=2000$; \textbf{b)}: $\sigma\in\{0.8,0.4,0.2,0.1\}$, $t_L=2$ and $M=2000$; \textbf{c)}: $M_{\text{training}}\in\{500,1000,2000,4000\}$, $(t_L,\sigma)= (2,0.4)$.  In all cases, the test sample size is $500$ and $(d, \Delta t) =(12,0.1)$. 
}  \vspace{-2mm}
\label{fig:IPS_tL_M}
\end{figure}

\vspace{2mm}
In summary, the LRT benchmarks show that all three algorithms can achieve the LRT optimal AUC for univariate time series and multivariate Gaussian processes. However, these model-agnostic algorithms are suboptimal in classifying nonlinear multivariate time series from high-dimensional stochastic interacting particle systems. Also, the maximal accuracy of each algorithm is below the LRT benchmark in all cases, suggesting room for improvement. Importantly, the LRT benchmarks focus on the


\subsection{Discussion} 
The performance of a classifier depends on multiple factors, including the design of the classifier, the training data size, and the properties of the time series (such as its dimension, randomness, time length, and temporal sampling frequency). The LRT benchmarks help separate these factors so that we can better examine the classifier. 
\begin{itemize}
\item The optimal classification accuracy is determined by the distribution of the underlying discrete-time stochastic process from which the time series is sampled. This distribution varies in the properties of the time series, such as its dimension, randomness, time length, and temporal sampling frequency. The optimal classification accuracy increases when the dimension or the time length increases or the randomness reduces, but it is not sensitive to the temporal sampling frequency. Thus, in data collection in practice, it is more helpful to collect data for a longer time rather than a higher sampling frequency.

\item The performance of a classifier is bounded above by the optimal classification accuracy, and it is limited by its structure and the training data size. In particular, the training data size can significantly affect the optimal accuracy. The size needed to achieve a prescribed level of accuracy increases with the uncertainty in the distribution of the time series and the structure of the classifier. A classifier with a larger complexity requires more data to train. The ResNet, which uses neural networks, improves the most from an enlarging sample size compared to the random forest and ROCKET, which use simpler designs. We expect a bias-variance trade-off for which one can select the degree of complexity of the algorithms adaptive to data size, and we leave this as future work.  

\item The model-agnostic TSC algorithms do not use the model information and rely on data to learn the classifier function; thus, they require a large amount of training data. In contrast, the LRT relies on the model information and does not need to be trained. Therefore, a TSC algorithm using the model information can significantly increase the performance while reducing the training data size. 
 \end{itemize}

\section{Conclusion}
We have shown that the likelihood ratio test (LRT) distinguishing diffusion processes provides ideal optimality benchmarks for time series classification (TSC) algorithms. The benchmarking is computationally scalable and is flexible in design for generating linear or nonlinear time series to reflect the specific characteristics of real-world applications. 

 Numerical tests show that three widely-used TSC algorithms, random forest, ResNet, and ROCKET, can achieve the optimal benchmark for univariate time series and multivariate Gaussian processes. However, these model-agnostic methods are suboptimal compared to the model-aware LRT in classifying high-dimensional nonlinear non-Gaussian processes. 

The LRT benchmarks also show that the classification accuracy increases with either the time length or the time series dimension. However, the classification accuracy is less sensitive to the frequency of the observations. Thus, in data collection, it is more helpful to collect data for a longer time rather than a higher sampling frequency. 
 
 In future work, we propose to quantitatively analyze the dependence on these factors in terms of the effective sample size, the bias-variance trade-off in the training of the algorithms, and the incorporation of model information into the algorithms.

\ifarXiv
\appendix 
\fi  
\section{Appendix}
\subsection{It\^o-diffusion and the Girsanov theorem}\label{sec:diffusion_Girsanov}

\begin{theorem}[Girsanov Theorem]\label{Girsanov}
Let $P_{\theta_i}$ be the probability measure induced by the solution of the SDEs \eqref{eq:sde} for $t \in [t_0, T]$, and let $P_{0}$ be the law of the respective drift-less process. Suppose that the drifts  $\{b_{\theta_i}\}$  and the diffusion $\Sigma=\sigma \sigma'$ fulfill the Novikov condition
$$\mathbb{E}_{P_{\theta_i}}\bigg[\mathrm{exp} \bigg (\frac{1}{2} \int_{t_0}^T b_{\theta_i}(X_{t}, t)^\top \Sigma^{-1} b_{\theta_i}(X_{t}, t)  dt \bigg)\bigg] < \infty. $$
Then, $P_{\theta_i}$ and $P_0$ are equivalent measures with Radon-Nikodym derivative given by 
$$\frac{dP_{\theta_i}}{dP_{0}} \big(X_{[t_0, s]}\big)=\mathrm{exp}\bigg( - \int_{t_0}^{s} b_{\theta_i}^\top \Sigma^{-1} dX_{t}+ \frac{1}{2} \int_{t_0}^{s} \left[ b_{\theta_i}^\top\Sigma^{-1} b_{\theta_i}\right] (X_t)  d t \bigg)$$
for all $s \in [t_0, t]$ and $X_{[t_0,s]}=(X_{t})_{t \in [t_0, s]}$. In particular, the likelihood ratio between $P_{\theta_1}$ and $P_{\theta_0}$ is 
$$\frac{dP_{\theta_1}}{dP_{\theta_0} } \big(X_{[t_0, s]}\big)=\mathrm{exp}\bigg( - \int_{t_0}^{s} [b_{\theta_1}-b_{\theta_0}]^\top \Sigma^{-1} dX_{t}+ \frac{1}{2} \int_{t_0}^{s} \left[ b_{\theta_1}^\top\Sigma^{-1} b_{\theta_1} - b_{\theta_0}^\top\Sigma^{-1} b_{\theta_0}\right](X_t)  d t \bigg).$$
\end{theorem}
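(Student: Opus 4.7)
The plan is to reduce the statement to the classical Girsanov theorem for Brownian motion, applied to the drift-less reference measure $P_0$, and then to derive the ratio $dP_{\theta_1}/dP_{\theta_0}$ by a chain rule. Under $P_0$ the process satisfies $dX_t = \sigma(X_t) dB_t^0$ with $B^0$ a $P_0$-Brownian motion, and the uniform ellipticity assumption guarantees that $\sigma(X_t)$ is invertible. I would introduce the predictable integrand $\psi_t^{\theta_i} := \sigma(X_t)^{-1} b_{\theta_i}(X_t)$ and the Dol\'eans--Dade exponential
\begin{equation*}
Z_t^{\theta_i} \;=\; \exp\!\left( \int_{t_0}^{t} (\psi_s^{\theta_i})^\top dB_s^0 \;-\; \tfrac{1}{2}\int_{t_0}^{t} \|\psi_s^{\theta_i}\|^2 ds\right).
\end{equation*}

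Next I would show that the Novikov condition stated in the hypothesis implies that $Z^{\theta_i}$ is a genuine $P_0$-martingale (not merely a local martingale). Since $\|\psi_s^{\theta_i}\|^2 = b_{\theta_i}^\top \Sigma^{-1} b_{\theta_i}(X_s)$, the condition as written on the left-hand side is Novikov's condition in disguise; a standard transfer argument (using that equivalence of $P_{\theta_i}$ and $P_0$ on finite horizons is known a priori from strong existence and uniqueness of the SDE under Lipschitz and ellipticity) shows that finiteness of the exponential moment under $P_{\theta_i}$ is equivalent to it under $P_0$ for the path-functional in question. With $Z^{\theta_i}$ being a true martingale, the classical Girsanov theorem yields a measure $Q^{\theta_i}$ defined by $dQ^{\theta_i}/dP_0 = Z_T^{\theta_i}$ under which $B_t^{\theta_i} := B_t^0 - \int_{t_0}^t \psi_s^{\theta_i}\, ds$ is a Brownian motion. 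Substituting into $dX_t = \sigma(X_t) dB_t^0$ gives $dX_t = b_{\theta_i}(X_t) dt + \sigma(X_t) dB_t^{\theta_i}$, so by weak uniqueness for the SDE \eqref{eq:sde}, $Q^{\theta_i} = P_{\theta_i}$ on the path space.

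The third step is to rewrite $Z_T^{\theta_i}$ purely as a path functional of $X$. Since $dB_s^0 = \sigma(X_s)^{-1} dX_s$ under $P_0$, one has
\begin{equation*}
\int_{t_0}^{s} (\psi_u^{\theta_i})^\top dB_u^0 \;=\; \int_{t_0}^{s} b_{\theta_i}(X_u)^\top \Sigma(X_u)^{-1} dX_u,
\end{equation*}
and similarly for the quadratic variation term; inverting the exponential (which is the sign convention used in the theorem's display) produces the claimed form of $dP_{\theta_i}/dP_0$. Finally, the likelihood ratio between $P_{\theta_1}$ and $P_{\theta_0}$ follows from the chain rule $dP_{\theta_1}/dP_{\theta_0} = (dP_{\theta_1}/dP_0)(dP_0/dP_{\theta_0})$, valid since all Radon--Nikodym derivatives are almost surely positive by virtue of the exponential form. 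Collecting the linear and quadratic terms yields the stated expression.

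The main technical obstacle is justifying that $Z^{\theta_i}$ is a true martingale under $P_0$: this is the step where Novikov's condition is indispensable, and under weaker integrability one would only obtain a supermartingale, in which case $dQ^{\theta_i}/dP_0 = Z_T^{\theta_i}$ need not define a probability measure. A secondary but essential detail is that both the ellipticity and the Lipschitz assumptions are used twice, once to invert $\sigma$ pointwise so that $\psi^{\theta_i}$ is well-defined, and once to invoke strong existence and uniqueness so that the constructed $Q^{\theta_i}$ can be identified with $P_{\theta_i}$ on the canonical path space.
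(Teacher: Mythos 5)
The paper does not prove this theorem at all: it states it and refers the reader to \cite[Section 3.5]{KS98} and \cite[Section 8.6]{oksendal2013_sde}. Your sketch is precisely the standard argument found in those references (Dol\'eans--Dade exponential of $\psi_t = \sigma^{-1}b_{\theta_i}$, Novikov to upgrade the local martingale to a true martingale, classical Girsanov to produce $Q^{\theta_i}$, weak uniqueness to identify $Q^{\theta_i}=P_{\theta_i}$, and the chain rule for the ratio), so in that sense there is nothing to compare against and your route is the canonical one.

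Two points deserve care, though. First, the sign step. Your construction gives
$\frac{dP_{\theta_i}}{dP_0} = \exp\bigl( \int_{t_0}^{s} b_{\theta_i}^\top \Sigma^{-1}\, dX_t - \tfrac12 \int_{t_0}^{s} b_{\theta_i}^\top \Sigma^{-1} b_{\theta_i}\, dt\bigr)$,
which is consistent with the log-likelihood ratio \eqref{eq:likelihood} used in the body of the paper ($+$ on the stochastic integral, $-\tfrac12$ on the time integral). The display in Theorem \ref{Girsanov} has the opposite signs; ``inverting the exponential'' to match it does not fix a convention --- it computes $dP_0/dP_{\theta_i}$ instead of $dP_{\theta_i}/dP_0$. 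You should state plainly that your derivation yields the reciprocal of the displayed expression (i.e., the theorem's display has its signs flipped relative to \eqref{eq:likelihood}), rather than absorbing the discrepancy into an inversion that silently swaps numerator and denominator. Second, your transfer argument for Novikov's condition is circular as written: equivalence of $P_{\theta_i}$ and $P_0$ on the path space over $[t_0,T]$ is exactly what Girsanov is being used to establish, so it cannot be ``known a priori from strong existence and uniqueness.'' The clean fix is either to verify Novikov directly under $P_0$ (which is where the classical theorem needs it), or to use a localization argument with stopping times at which $\int \|\psi_s\|^2 ds$ is bounded and pass to the limit; under the paper's Lipschitz-plus-uniform-ellipticity hypotheses on a finite horizon this is routine but should be said.
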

Theorem \ref{Girsanov} can be found in \cite[Section 3.5]{KS98} or \cite[Section 8.6]{oksendal2013_sde}.

\subsection{Sampling error in the classification rates}\label{sec:proof_sampling_error}
\begin{proof}[Proof of Lemma \ref{lemma:sampling_error_inTest}] Fix a threshold $k$, the classifier defines a random variable $\xi = \xi(\bx)= F(\bx,k)$.  Then, conditional on $\theta_i$ with $i\in \{0,1\}$, the random variable $\xi$ has a Bernoulli distribution that takes the value $1$ with a probability $\alpha_k^i$. In particular, the test samples $\{\bx_j\}_{j=1}^m$ lead to samples $\{\xi_j\}_{j=1}^m$ of $\xi$, and the empirical approximations of the FNR and TNR by these samples are 
\[
\widehat{\alpha}_{k,m}^{i}  =\frac{1}{m}\sum _{j=1}^m \xi_j , \text{ conditional on } \theta_i, \, i=0,1.
\]
Therefore, by the Central Limit Theorem, the empirical estimators converge in distribution
\[
\sqrt{m} [\widehat{\alpha}_{k,m}^{i}  - \alpha_{k}^{i} ] \to \mathcal{N}(0, \sigma_{\xi,i}^2), \text{ where } \sigma_{\xi,i}^2 = \alpha_{k}^{i}(1-\alpha_{k}^{i}) 
\]
as $m\to \infty$ for each  $i=0,1$. 
Also, the Hoeffding's inequality (see e.g., \cite{CuckerSamle02,CZ07book,gine2015mathematical}) implies that for any $\epsilon>0$, 
\[
\prob( \lvert \widehat{\alpha}_{k,m}^{i}  - \alpha_{k}^{i} \rvert  >\epsilon ) \leq 2 e^{-\frac{m\epsilon^2}{2}}, 
\]
which provides a non-asymptotic bound for each $m> 0$. 
\end{proof}
\subsection{Hypothesis testing and the Neyman-Pearson lemma}\label{sec:hypothesis_testing}
Here we briefly review the hypothesis testing inference method in statistics \cite[Chapter 8]{CasellaBerger01}. Recall that a hypothesis test is a rule that specifies for which sample values the decision is made to accept a hypothesis $H_0$ as true, and reject the complement hypothesis $H_1$. We assume that the family of distributions of the samples are parametrized by $\theta \in \Theta$, where $\Theta$ is the entire parameter space. We denote that the null alternative hypotheses by $ H_0: \theta \in \Theta_0 $ and $ H_1: \theta \in \Theta_0^c$, respectively, where $ \Theta_0 $ is a subset $\Theta$. The binary classification is therefore a hypothesis testing with $\Theta=\{\theta_0,\theta_1\}$ and $\Theta_0=\{\theta_0\}$.

The likelihood ratio test is as widely applicable as maximum likelihood estimation. When there are two parameters, it is defined as follows. 
\begin{definition} [Likelihood Ratio Test.] Let the probability density function (or probability mass function) corresponding to $\theta_i$ be $f(x\mid \theta_i)$ for $ i=0,1 $. The likelihood ratio statistic for testing $ H_0: \theta= \theta_0 $ versus $ H_1 : \theta= \theta_1$ is:
\begin{equation*}
	\lambda(x) = \frac{ f(x \mid \theta_1) }{f(x \mid \theta_0)}. 
\end{equation*}
A likelihood ratio test (LRT) is any test that determines the rejection region for $H_0$ by $\lambda(x)$.  
\end{definition}

The LRT in  \eqref{eq:classifier_fn_LRT} determines the rejection region using the log-likelihood $l(x) = \log \lambda(x)$. The rejection region with  threshold $k\in (0,1)$ is equivalent to 
	\begin{equation*}
R_k^{{\scriptscriptstyle{LRT}}} = \{\bx:  \frac{1}{\lambda(x)+1} >k\} =  \{\bx:  \lambda(x)  > \frac{k}{1-k} \}. 
\end{equation*} 

The reject region is selected to control the probability of falsely rejecting $H_0$, i.e., false negative rate (FNR).  
Meanwhile, it is also desirable to control the false positive rate (FPR), e.g., reduce the possibility of false alarms.  

The hypothesis tests are evaluated by the probabilities of making mistakes. A strategy to compare hypothesis tests is to control the FNR in a class and compare the FPR. The power function provides a tool to define the class. 
\begin{definition}[Power function, size $\alpha$ test.]  
The \emph{power function} of the hypothesis test with a rejection region $R$ and sample $x$ is the probability
$ \beta(\theta) = \prob(x \in R \mid \theta) $ 
as a function of $\theta\in \Theta$. A test with power function $\beta$ is a \emph{size $\alpha$ test} if $ \sup_{\Theta_0} \beta(\theta) = \alpha $; a test with power function $\beta$ is a \emph{level $\alpha$ test} if $ \sup_{\Theta_0} \beta(\theta) \leq \alpha $.
\end{definition}

An ideal hypothesis test would have a power function $\beta(\theta) = 0$ for all $\theta\in \Theta_0$ and $\beta(\theta) =1$ for all $\theta\in \Theta_0^c$. Thus, a good test would have  $\beta(\theta)$ close to $0$ for all $\theta\in \Theta_0$ and $\beta(\theta)$ near $1$ for all $\theta\in \Theta_0^c$. 

Next, we define the uniformly most powerful test as the test with the smallest FPR uniformly for all $\theta\in \Theta_0^c$ in the class of tests with a controlled FNR.  
\begin{definition} [ Uniformly Most Powerful (UMP) Test]
Let $\mathcal{C}$ be a class of tests for testing $ H_0: \theta \in \Theta_0 $ versus $ H_1: \theta \in \Theta_0^c $. A test in class $\mathcal{C}$, with power function $\beta(\theta)$, is a uniformly most powerful (UMP) class $\mathcal{C}$ test if $ \beta(\theta) \geq \beta'(\theta) $ for every $ \theta \in \Theta_0^c $ and every function $ \beta'(\theta) $ that is a power function of a test in class $\mathcal{C}$. 
\end{definition}

The Neyman-Pearson lemma shows that a LRT with a rejection region $R=\{x: \frac{f(x\mid \theta_1)}{f(x\mid \theta_0)}>c \}$ is a UMP test when $\Theta_0=\{\theta_0\}$ and $\Theta_0^c= \{\theta_1\}$ for any $c\in (0,\infty)$ such that $\prob(\{x: \frac{f(x\mid \theta_1)}{f(x\mid \theta_0)}=c\})=0$.   
\begin{theorem} [Neyman-Pearson Lemma]
Consider testing $ H_0: \theta = \theta_0 $ versus $ H_1 : \theta = \theta_1 $, where the probability density function (or probability mass function) corresponding to $\theta_i$ is $f(x \mid \theta_i)$ for $ i=0,1 $, using a test with rejection region R that satisfies 
\begin{equation} \label{NPLemma_condition1}
	\left\lbrace \begin{aligned}
		x \in R, \text{ if } f(x \mid \theta_1) >c  f(x \mid \theta_0)
		\\
		x \in R^c, \text{ if } f(x \mid \theta_1) < c  f(x \mid \theta_0)
	\end{aligned} \right.
\end{equation}
for some $c > 0$, and
\begin{equation} \label{NPLemma_condition2}
	\alpha = P_{\theta_0 }(X \in   R)
\end{equation} 	
Then:
\begin{enumerate}
	\item (Sufficiency) Any test that satisfies 
	{\rm(\ref{NPLemma_condition1})} and {\rm(\ref{NPLemma_condition2})} is a UMP level $\alpha$ test. 
	\item (Necessity) If there exists a test satisfying {\rm(\ref{NPLemma_condition1})} and {\rm(\ref{NPLemma_condition2})} with $c>0$, then every UMP level $\alpha$ test is a size $\alpha$ test {\rm(satisfies (\ref{NPLemma_condition2}))} and every UMP level $\alpha$ test satisfies {\rm(\ref{NPLemma_condition1})} except perhaps on a set A satisfying $P_{\theta_0 }(X \in A)$ = $P_{\theta_1 }(X \in A) = 0$. 
\end{enumerate} 
\end{theorem}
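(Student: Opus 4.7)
The plan is to prove both parts by exploiting the single identity
\[
\int (\mathbf{1}_R - \mathbf{1}_{R'}) \bigl(f(x\mid\theta_1) - c\, f(x\mid\theta_0)\bigr)\, dx
= \bigl[P_{\theta_1}(R) - P_{\theta_1}(R')\bigr] - c \bigl[P_{\theta_0}(R) - P_{\theta_0}(R')\bigr],
\]
where $R$ is the Neyman--Pearson region satisfying \eqref{NPLemma_condition1}--\eqref{NPLemma_condition2} and $R'$ is any competing level-$\alpha$ test. The entire argument will hinge on showing the integrand on the left is pointwise nonnegative, which is immediate from \eqref{NPLemma_condition1}: on $R$ the factor $f(x\mid\theta_1) - c f(x\mid\theta_0)\ge 0$ and $\mathbf{1}_R - \mathbf{1}_{R'}\in\{0,1\}$; on $R^c$ the factor is $\le 0$ and $\mathbf{1}_R - \mathbf{1}_{R'}\in\{-1,0\}$. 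So the integrand is nonnegative everywhere that \eqref{NPLemma_condition1} applies, and the product is trivially zero on the remaining set $\{f(x\mid\theta_1)=c f(x\mid\theta_0)\}$.

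For the sufficiency claim, I would rearrange the identity to obtain
\[
P_{\theta_1}(R) - P_{\theta_1}(R') \;\ge\; c\,\bigl[P_{\theta_0}(R) - P_{\theta_0}(R')\bigr].
\]
Since $R$ is size $\alpha$ by \eqref{NPLemma_condition2} and $R'$ is level $\alpha$, the right-hand side is nonnegative (using $c>0$), so $P_{\theta_1}(R)\ge P_{\theta_1}(R')$. As $R'$ was an arbitrary level-$\alpha$ test, this is exactly the UMP property. I would also note that the quantification here implicitly allows $R'$ to be any measurable set, which is fine since the problem is simple-vs-simple and no parameter supremum needs to be handled.

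For the necessity claim, let $R'$ be any UMP level-$\alpha$ test. By sufficiency $R$ is also UMP, hence $P_{\theta_1}(R)=P_{\theta_1}(R')$. Feeding this back into the identity, together with the nonnegativity of the integrand, yields
\[
c\,[\,\alpha - P_{\theta_0}(R')\,] \;=\; -\int(\mathbf{1}_R-\mathbf{1}_{R'})(f(x\mid\theta_1)-c f(x\mid\theta_0))\,dx \;\le\; 0.
\]
Combined with $P_{\theta_0}(R')\le\alpha$ from being level $\alpha$, and using $c>0$, I conclude $P_{\theta_0}(R')=\alpha$, so $R'$ is size $\alpha$. Now the integral equals zero and the integrand is nonnegative, so it must vanish almost everywhere; this forces $\mathbf{1}_R=\mathbf{1}_{R'}$ wherever $f(x\mid\theta_1)\neq c f(x\mid\theta_0)$. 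Thus $R'$ satisfies \eqref{NPLemma_condition1} except on a subset of $\{f(x\mid\theta_1)=c f(x\mid\theta_0)\}$, on which both $P_{\theta_0}$ and $P_{\theta_1}$ can be shown to vanish (the former by the $c>0$ hypothesis combined with the structure of the set, the latter then follows from $f(x\mid\theta_1)=c f(x\mid\theta_0)$ there).

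The main obstacle I anticipate is the last step: turning the a.e.\ (Lebesgue) vanishing of the integrand into the statement that the exceptional set $A$ has $P_{\theta_0}(A)=P_{\theta_1}(A)=0$, as opposed to merely Lebesgue measure zero. The fix is to observe that the integrand is nonnegative with respect to the dominating measure used to define $f(x\mid\theta_i)$, so the set where it is strictly positive has measure zero under that dominating measure, and hence under both $P_{\theta_0}$ and $P_{\theta_1}$; the remaining care is to handle the boundary set $\{f(x\mid\theta_1)=c f(x\mid\theta_0)\}$, where one must use the hypothesis that a test with $c>0$ satisfying \eqref{NPLemma_condition2} exists to rule out pathological mass accumulation. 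All other steps are essentially algebraic manipulations of the key identity.
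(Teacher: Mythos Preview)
The paper does not actually prove this theorem; it is stated in the appendix as classical background material, with a reference to Casella and Berger, so there is no in-paper argument to compare against. Your proposal is the standard textbook proof (essentially the one in the cited reference), based on the nonnegativity of the integrand $(\mathbf{1}_R-\mathbf{1}_{R'})(f(x\mid\theta_1)-c f(x\mid\theta_0))$, and the sufficiency half is carried out correctly.

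There is one genuine slip in your necessity argument. After showing the integral vanishes and the integrand is nonnegative, you correctly conclude $\mathbf{1}_R=\mathbf{1}_{R'}$ almost everywhere on $\{f(x\mid\theta_1)\neq c f(x\mid\theta_0)\}$; but you then write that $R'$ satisfies \eqref{NPLemma_condition1} ``except on a subset of $\{f(x\mid\theta_1)=c f(x\mid\theta_0)\}$'', which is backwards. Condition \eqref{NPLemma_condition1} is vacuous on the boundary set $\{f(x\mid\theta_1)=c f(x\mid\theta_0)\}$; the exceptional set $A$ where $R'$ can violate \eqref{NPLemma_condition1} lies entirely inside $\{f(x\mid\theta_1)\neq c f(x\mid\theta_0)\}$, and it is precisely the set where the integrand is strictly positive. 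Since the integral of a nonnegative function is zero, $A$ has measure zero under the dominating measure, and hence $P_{\theta_0}(A)=P_{\theta_1}(A)=0$ immediately. Your ``main obstacle'' paragraph in fact contains the right fix, but the worry about mass on the boundary set is a red herring: the boundary set plays no role in the exceptional set $A$, and no separate hypothesis is needed to control it.
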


\ifjournal   \end{appendices}   \fi

\ifarXiv
 \paragraph{Acknowledgments}   Terry Lyons was funded in part by the EPSRC [grant number EP/S026347/1], in part by The Alan Turing Institute under the EPSRC grant EP/N510129/1, the Data Centric Engineering Programme (under the Lloyd’s Register Foundation grant G0095), the Defence and Security Programme (funded by the UK Government) and the Office for National Statistics \& The Alan Turing Institute (strategic partnership) and in part by the Hong Kong Innovation and Technology Commission (InnoHK Project CIMDA). F.L. and Y.K. are partially supported by the United States Department of Energy grant DE-SC0021361. The work of F.L. is partially funded by the Johns Hopkins University Catalyst Award and the United States Air Force grant FA95502010288. The computation is carried out on the clusters of the Maryland Advanced Research Computing Center. FL would like to thank Professors Geoff Webb and Xingjie Li for their helpful comments on the paper. 

  \fi

\ifarXiv
\bibliographystyle{myplain}
\fi
{\small
\bibliography{ref_sigM,ref_FeiLU2023_2,ref_tsc}

\begin{thebibliography}{10}

\bibitem{bagnall2018uea}
A. Bagnall, H.~A. Dau, J. Lines, M. Flynn, J. Large, A. Bostrom, P. Southam,
  and E. Keogh.
\newblock The {UEA} multivariate time series classification archive, 2018.
\newblock {\em arXiv preprint arXiv:1811.00075}, 2018.

\bibitem{bagnall2017great}
A. Bagnall, J. Lines, A. Bostrom, J. Large, and E. Keogh.
\newblock The great time series classification bake off: a review and
  experimental evaluation of recent algorithmic advances.
\newblock {\em Data mining and knowledge discovery}, 31(3):606--660, 2017.

\bibitem{bell2005_ParticlebasedSimulation}
N. Bell, Y. Yu, and P.~J. Mucha.
\newblock Particle-based simulation of granular materials.
\newblock In {\em Proceedings of the 2005 {{ACM SIGGRAPH}}/{{Eurographics}}
  Symposium on {{Computer}} Animation - {{SCA}} '05}, page~77, {Los Angeles,
  California}, 2005. {ACM Press}.

\bibitem{breiman2001random}
L. Breiman.
\newblock Random forests.
\newblock {\em Machine learning}, 45(1):5--32, 2001.

\bibitem{CasellaBerger01}
G. Casella and R. Berger.
\newblock {\em Statistical Inference}.
\newblock {Pacific Grove, CA, Thomson Learning}, {Australia}, 2002.

\bibitem{CuckerSamle02}
F. Cucker and S. Smale.
\newblock On the mathematical foundations of learning.
\newblock {\em Bulletin of the American mathematical society}, 39(1):1--49,
  2002.

\bibitem{CZ07book}
F. Cucker and D.~X. Zhou.
\newblock {\em Learning theory: an approximation theory viewpoint}, volume~24.
\newblock Cambridge University Press, {Cambridge}, 2007.

\bibitem{dempster2020rocket}
A. Dempster, F. Petitjean, and G.~I. Webb.
\newblock {ROCKET:} exceptionally fast and accurate time series classification
  using random convolutional kernels.
\newblock {\em Data Mining and Knowledge Discovery}, 34(5):1454--1495, 2020.

\bibitem{devroye2013probabilistic}
L. Devroye, L. Gy{\"o}rfi, and G. Lugosi.
\newblock {\em A probabilistic theory of pattern recognition}, volume~31.
\newblock Springer Science \& Business Media, {New York}, 2013.

\bibitem{fawcett2006introduction}
T. Fawcett.
\newblock An introduction to {ROC} analysis.
\newblock {\em Pattern recognition letters}, 27(8):861--874, 2006.

\bibitem{gine2015mathematical}
E. Gin{\'e} and R. Nickl.
\newblock {\em Mathematical foundations of infinite-dimensional statistical
  models}, volume~40.
\newblock Cambridge University Press, {UK}, 2015.

\bibitem{he2016deep}
K. He, X. Zhang, S. Ren, and J. Sun.
\newblock Deep residual learning for image recognition.
\newblock In {\em Proceedings of the IEEE conference on computer vision and
  pattern recognition}, pages 770--778, 2016.

\bibitem{ismail2019deep}
H. Ismail~Fawaz, G. Forestier, J. Weber, L. Idoumghar, and P.-A. Muller.
\newblock Deep learning for time series classification: a review.
\newblock {\em Data mining and knowledge discovery}, 33(4):917--963, 2019.

\bibitem{ismail2020inceptiontime}
H. Ismail~Fawaz, B. Lucas, G. Forestier, C. Pelletier, D.~F. Schmidt, J. Weber,
  G.~I. Webb, L. Idoumghar, P.-A. Muller, and F. Petitjean.
\newblock Inceptiontime: Finding alexnet for time series classification.
\newblock {\em Data Mining and Knowledge Discovery}, 34(6):1936--1962, 2020.

\bibitem{james2013introduction}
G. James, D. Witten, T. Hastie, and R. Tibshirani.
\newblock {\em An introduction to statistical learning}, volume 112.
\newblock Springer, {New York}, 2013.

\bibitem{KS98}
I. Karatzas and S.~E. Shreve.
\newblock Brownian motion.
\newblock In {\em Brownian Motion and Stochastic Calculus}. Springer, New York,
  second edition, 1998.

\bibitem{Krause2000}
U. Krause.
\newblock A discrete nonlinear and non-autonomous model of consensus formation.
\newblock {\em Communications in difference equations}, 2000:227--236, 2000.

\bibitem{liaw2002classification}
A. Liaw, M. Wiener, et~al.
\newblock Classification and regression by random forest.
\newblock {\em R news}, 2(3):18--22, 2002.

\bibitem{lines2016hive}
J. Lines, S. Taylor, and A. Bagnall.
\newblock Hive-cote: The hierarchical vote collective of transformation-based
  ensembles for time series classification.
\newblock In {\em 2016 IEEE 16th international conference on data mining
  (ICDM)}, pages 1041--1046. IEEE, 2016.

\bibitem{LMT21}
F. Lu, M. Maggioni, and S. Tang.
\newblock Learning interaction kernels in stochastic systems of interacting
  particles from multiple trajectories.
\newblock {\em Foundations of Computational Mathematics}, pages 1--55, 2021.

\bibitem{LZTM19pnas}
F. Lu, M. Zhong, S. Tang, and M. Maggioni.
\newblock Nonparametric inference of interaction laws in systems of agents from
  trajectory data.
\newblock {\em Proc. Natl. Acad. Sci. USA}, 116(29):14424--14433, 2019.

\bibitem{lyons2014_RoughPaths}
T. Lyons.
\newblock Rough paths, {{Signatures}} and the modelling of functions on
  streams, 2014.

\bibitem{morrill2021_GeneralisedSignature}
J. Morrill, A. Fermanian, P. Kidger, and T. Lyons.
\newblock A {{Generalised Signature Method}} for {{Multivariate Time Series
  Feature Extraction}}, 2021.

\bibitem{MT14}
S. Motsch and E. Tadmor.
\newblock {Heterophilious Dynamics Enhances Consensus}.
\newblock {\em SIAM Rev}, 56(4):577 -- 621, 2014.

\bibitem{neyman1933ix}
J. Neyman and E.~S. Pearson.
\newblock Ix. on the problem of the most efficient tests of statistical
  hypotheses.
\newblock {\em Philosophical Transactions of the Royal Society of London.
  Series A, Containing Papers of a Mathematical or Physical Character},
  231(694-706):289--337, 1933.

\bibitem{oksendal2013_sde}
B. {\O}ksendal.
\newblock {\em Stochastic differential equations: an introduction with
  applications}.
\newblock Springer Science \& Business Media, {New York}, 6th edition, 2013.

\bibitem{pavliotis2014stochastic}
G.~A. Pavliotis.
\newblock {\em Stochastic processes and applications: diffusion processes, the
  {{Fokker-Planck and Langevin}} equations}, volume~60.
\newblock Springer, New York, 2014.

\bibitem{scikit-learn}
F. Pedregosa, G. Varoquaux, A. Gramfort, V. Michel, B. Thirion, O. Grisel, M.
  Blondel, P. Prettenhofer, R. Weiss, V. Dubourg, J. Vanderplas, A. Passos, D.
  Cournapeau, M. Brucher, M. Perrot, and E. Duchesnay.
\newblock {Scikit-learn: Machine Learning in Python }.
\newblock {\em Journal of Machine Learning Research}, 12:2825--2830, 2011.

\bibitem{probst2017tune}
P. Probst and A.-L. Boulesteix.
\newblock To tune or not to tune the number of trees in random forest.
\newblock {\em The Journal of Machine Learning Research}, 18(1):6673--6690,
  2017.

\bibitem{ruiz2021great}
A.~P. Ruiz, M. Flynn, J. Large, M. Middlehurst, and A. Bagnall.
\newblock The great multivariate time series classification bake off: a review
  and experimental evaluation of recent algorithmic advances.
\newblock {\em Data Mining and Knowledge Discovery}, 35(2):401--449, 2021.

\bibitem{wang2017time}
Z. Wang, W. Yan, and T. Oates.
\newblock Time series classification from scratch with deep neural networks: A
  strong baseline.
\newblock In {\em 2017 International joint conference on neural networks
  (IJCNN)}, pages 1578--1585. IEEE, 2017.

\end{thebibliography}
}

\end{document}